\pdfoutput=1 
\documentclass[twoside]{article}

%
\usepackage[accepted]{aistats2021}
%


\usepackage[round]{natbib}


\usepackage[utf8]{inputenc} 
\usepackage[T1]{fontenc}    
\usepackage{hyperref}       
\usepackage{url}            
\usepackage{booktabs}       

\usepackage{amsfonts}       
\usepackage{nicefrac}       
\usepackage{microtype}      

\usepackage{graphicx}
\usepackage{amsthm}
\usepackage{amsmath}
\usepackage{amssymb} 
\usepackage{mathtools}
\usepackage{comment}
\usepackage{wrapfig}

\usepackage[capitalize]{cleveref}

\renewcommand{\ref}{\cref}

\theoremstyle{plain}
\newtheorem{thm}{Theorem}[section]
\newtheorem{prop}[thm]{Proposition}
\newtheorem{lemma}[thm]{Lemma}
\newtheorem{cor}[thm]{Corollary}

\theoremstyle{definition}
\newtheorem{defn}[thm]{Definition}

\newtheorem{example}[thm]{Example}

\newtheorem{assumption}[thm]{Assumption}
\usepackage{color}
\newcommand{\com}[1]{#1}

\newcommand{\R}{\mathbb{R}}
\newcommand{\N}{\mathbb{N}}
\newcommand{\E}{\mathbb{E}}
\newcommand{\V}{\mathbb{V}}

\newcommand{\mr}[1]{\mathrm{#1} }
\newcommand{\mc}[1]{\mathcal{#1}}

\newcommand{\eps}{\varepsilon}

\DeclareMathOperator{\tr}{\mathrm{tr}}


\newcommand{\pp}[2]{\frac{\partial #1}{\partial #2}}

\newcommand{\pre}{h}
\newcommand{\post}{x}
\newcommand{\act}{\varphi}



\newcommand{\Hmltn}{H}

\newcommand{\affine}[2]{ ( #2 +  #1 \,\boldsymbol{\cdot}\, )_*} 

\begin{document}

%

%

\twocolumn[

\aistatstitle{The Spectrum of Fisher Information of Deep Networks Achieving Dynamical Isometry}

\aistatsauthor{ Tomohiro Hayase \And Ryo Karakida}

\aistatsaddress{ Fujitsu Laboratories \And  AIST} ]

\begin{abstract}
The Fisher information matrix (FIM) is fundamental to understanding the trainability of deep neural nets (DNN), since it describes the parameter space's local metric. 
We investigate the spectral distribution of the conditional FIM, which is the FIM given a single sample, by focusing on fully-connected networks achieving dynamical isometry.
Then, while dynamical isometry is known to keep specific backpropagated signals independent of the depth, we find that the parameter space's local metric linearly depends on the depth even under the dynamical isometry.
More precisely, we reveal that the conditional FIM's spectrum concentrates around the maximum and the value grows linearly as the depth increases.
To examine the spectrum, considering random initialization and the wide limit, we construct an algebraic methodology  based on the free probability theory.
As a byproduct, we provide an analysis of the solvable spectral distribution in two-hidden-layer cases. 
Lastly, experimental results verify that  the appropriate learning rate for the online training of DNNs is in inverse proportional to depth, which is determined by the conditional FIM's spectrum.
\end{abstract}

\section{Introduction}
Deep neural networks (DNNs) have empirically succeeded in achieving high performances in various machine-learning tasks \citep{LeCun2015DeepLearning, Goodfellow2016DeepLearning}.
Nevertheless, their theoretical understanding has been limited, and their success depends much on the heuristic search setting, such as architectures and hyper-parameters. 
In order to understand and improve the training of DNNs, researchers have developed some theories to investigate, for instance, vanishing/exploding gradient problems \citep{Schoenholz2016DeepPropagation},  the shape of the loss landscape \citep{Pennington2018spectrum, Karakida2019universal}, the global convergence of training and the generalization \citep{Jacot2018NeuralNetworks}.

The Fisher information matrix (FIM) has been a fundamental quantity for such theoretical understandings.  The FIM describes the local metric of the loss surface concerning the KL-divergence function \citep{Amari2016}. 
In particular, the eigenvalue spectrum describes the efficiency of optimization methods. 
For instance, the maximum eigenvalue determines an appropriate size of the learning rate of the first-order gradient method for convergence \citep{Cun1991eigenvalues, Karakida2019universal, wu2019how}. 
In spite of its importance, the spectrum of FIMs in neural networks is not revealed enough from a theoretical perspective. 
The reason is that it has been limited to random matrix theory for shallow networks \citep{Pennington2018spectrum} or mean-field theory for eigenvalue bounds, which may be loose in general \citep{Karakida2019normalization}. 
Thus, we need an alternative approach applicable to DNNs.

It is well known that it is difficult to reduce the training error in very deep models without careful prevention of the vanishing/exploding of the gradient. 
Naive settings (i.e., activation function and initialization) cause  vanishing/exploding gradients, as long as the network is relatively deep. 
The dynamical isometry \citep{Saxe2014exact,Pennington2018emergence} was proposed to solve this problem. The dynamic isometry can facilitate training by setting the input-output Jacobian's singular values to be one, where the input-output Jacobian is the Jacobian matrix of the DNN at a given input.  
Experiments have shown that with initial values and models satisfying dynamical isometry, very deep models can be trained without gradient vanishing/exploding;
\citep{Pennington2018emergence,xiao2018dynamical,sokol2018information} have found that DNNs achieve approximately dynamical isometry over random orthogonal weights, but they do not do so over random Gaussian weights.

We investigate the asymptotic spectrum of the FIM of multi-layer perceptrons satisfying dynamical isometry in the present work. 
In order to handle the mathematical difficulty to treat the spectrum of the full FIM, we focus on the \emph{conditional FIM} given a single sample. 

While analyzing the spectrum of DNN, we often face mathematical difficulties caused by the non-linearity of the activation function, the depth of the network, and random matrices.
To handle such difficulty,  we use the free probability theory (FPT). 
The FPT, which is invented by Voiculescu for understanding von Neuman algebras \citep{Voiculescu1985symmetries},  provides algebraic tools of random matrix theory \citep{voiculescu1991limit}.
Since DNN's FIM is a random matrix polynomial of the Jacobian, the FPT provides tools for understanding FIM's spectral distribution.  We use these tools to obtain the propagation of spectral distributions through the layers and then obtain the critical recursive equation of the spectral distributions.

Our findings are the following. 

The main finding is that the dynamical isometry makes the spectrum of the conditional FIM concentrates on the maximum spectrum, which value grows linearly as the depth increases (\cref{thm:maximum}). 
An interesting phenomenon is that the FIM spectrum depends on the depth, unlike the spectrum of the input-output Jacobian, which is independent of the depth. 
It follows from this that as the depth increases, the parameter space's local geometry linearly depends on the depth and the first-order optimization also suffers from it. 
Now, our approach for the spectral analysis of DNNs consists of three steps. 
In the first step, we consider the dual of the conditional FIM in order to focus on the non-trivial non-zero eigenvalues. 
In the second step, the dual's eigenvalue distribution is decomposed to a free multiplicative convolution of two distributions. Subsequently, we introduce the recursive equations \eqref{align:reccurence_spec} of the dual's spectral distribution throughout the layers. 
In the last step, the induction on depth shows simultaneously that the maximal value of the limit spectrum of the dual is an atom with a large weight, and we get the recursive equation of the maximum \eqref{align:recursive_max}. 
We emphasize that in our setting, the atom helps us simplify the analysis of the maximum. 
n
Secondly,  we discover a solvable case on the spectrum distribution of the dual conditional FIM of a non-trivial DNN. 
In more detail, we explicitly show the asymptotic spectrum of the dual conditional FIM of a two-hidden-layer (a total of three layers) DNN (\cref{thm:two-hidden-layer}). 
As long as we know, this is the first solvable case ever for the FIM's spectrum of the deep architecture and clarifies the connection to the universal law of random matrices in FPT.  

Thirdly, we empirically confirm in \cref{sssec:expected_vs_conditional}  that the spectrum of the FIM at a small number of samples has the same property as the single-sample version. 
\cref{sec:block-diagonal} describes the rationale for a part of the experiment.

Lastly, in \cref{ssec:dynamics}, experimental results confirm that the FIM's dependence on depth determines the appropriate magnitude of the learning rate for the convergence of the first-order optimization at the initial phase of the online training of DNNs.

Our analysis is the first step towards a theoretical understanding of the FIM of DNN achieving dynamical isometry. 
By extending our framework, we expect to see the spectrum of other FIMs in more varied settings.

\subsection{Related Works}
\paragraph{Dynamical Isometry and Edge of Chaos}
\com{ A DNN is said to be on the edge of chaos if it preserves the norm of the gradient and the mean squared singular value of the Jacobian throughout layers.
However, vanishing or exploding of gradients in specific directions still occurs in the worst case. 
To prevent them, we need both orthogonal initialization and weight scale depending on activation functions, and that is the finding of the theory of dynamical isometry \citep{Pennington2017Resurrecting,Pennington2018emergence}.}
The theory of dynamical isometry has been extended to various architectures \citep{burkholz2019initialization, gilboa2019dynamical,tarnowski2019dynamical},  and experiments on CNN showed that the dynamical isometry reduced the training error of 10,000 layers of models \citep{xiao2018dynamical}. 
In contrast, even if the backpropagation signal is isotropic, our study shows that the parameter space's curvature essentially depends on the number of layers. In particular, our study shows that many eigenvalues concentrate on the number of the point of the depth.
For example, it suggests that the learning rate, which has been implicitly set, needs to be set to a smaller and more appropriate value depending on the number of layers.

\paragraph{Fisher Information Matrix of DNN}
Several works have provided the spectral analysis of FIM in limited cases. \citep{Pennington2018spectrum} has analyzed the spectrum of FIM via random matrix theory but limited to shallow networks and random Gaussian weight matrices. 
\citep{Karakida2019normalization, Karakida2019universal} obtained some bounds for the FIM's eigenvalues in DNNs, but their bounds are loose in general and also limited to Gaussian weights. 
\cite{Saxe2014exact} treats the loss's Hessian eigenvalues, but the work is restricted to a linear activation. 
On the contrary, we investigate the FIM spectrum of deep non-linear networks on random orthogonal weights, which satisfy the dynamical isometry. 

\paragraph{Neural Tangent Kernel}
Additionally, let us remark that \citep{Jacot2018NeuralNetworks} uses a version of the dual FIM $\Theta$ with Gaussian initialization as the kernel matrix and call it the neural tangent kernel (NTK). The FIM's eigenvalues also determine convergence characteristics of gradient descent in wide neural networks through the NTK. 
\com{ 
Settings of the last layer of DNNs are different between the theory of NTK and the theory of dynamical isometry.
In the theory of NTK, 
 the dimension of the final layer of the DNN is set to be lower-order than that of hidden layers. However, in the theory of dynamical isometry, the final layer's dimension is set to be of the same order as the hidden layers.
}
\com{ \cite{huang2020neural} treats the NTK regime of orthogonal initialization but does not analyze the case of dynamical isometry.}
\section{Preliminaries}
\subsection{Settings}

\paragraph{Spectral Distribution}
For  $M \times M$ symmetric matrix $A$ with $M \in \N$, its spectral distribution $\mu_A$ is given by 
\begin{align}
    \mu_A = \frac{1}{M} \sum_{k=1}^M \delta_{\lambda_k},
\end{align}
where  $\tr$ is the normalized trace, $\lambda_k (k=1, \dots, M)$ are eigenvalues of $A$, and $\delta_\lambda$ is the discrete probability distribution whose support is $\{\lambda\} \subset \R$,
In general, for a noncommutative probability space $(\mc{A}, \tau)$ (see supplementary material C), 
the  spectral distribution $\mu$ of $A \in \mc{A}$ is a probability distribution $\mu$ on $\R$ satisfying $\tau(A^m) = \int t^m\mu(dt)$ for any $m \in \N$.

\paragraph{Dynamical Isometry}
\com{We say that a feed-forward network achieves dynamical isometry if all singular values of the Jacobian of the network on an input are equal to one.
In a later section, we introduce an approximate dynamical isometry in a similar setting as  \citep{ Pennington2018emergence}.
}

\paragraph{Network Architecture}
We assume random weight matrices as is usual in the studies of FIM \citep{Pennington2018spectrum, Karakida2019universal} and dynamical isometry \citep{Saxe2014exact, Pennington2018emergence}. 
Fix $L \in \N$. We consider an $L$-layer feed-forward neural network $f_\theta$ with $M \times M$ weight matrices $W_1, W_2, \dots, W_L$ and pointwise activation functions $\act^1, \dots \act^{L-1}$ on $\R$.
Besides, we assume that $\act^\ell$ is continuous and differentiable except for finite points.
Firstly, pick a single input  $x \in \R^M$.
Set $\post^0=x$. 
For $\ell=1, \dots, L$, set
\begin{equation}
    \pre^\ell  = W_\ell \post^{\ell-1} + b^\ell, \ \ 
      \post^\ell  = \act^\ell(\pre^\ell ).
    \label{align:post-to-pre}
\end{equation}
We omit the bias parameters $b^\ell$ in \eqref{align:post-to-pre} to simplify the analysis. 
Write $f_\theta(x) = h^L$.
Write
\begin{align}
    D_\ell = \pp{\post^\ell}{\pre^\ell},  \   \delta_{L \to \ell}  = \pp{\pre^L}{\pre^\ell}.
\end{align}

\paragraph{Fisher Information Matrix}
We focus on the \emph{the Fisher information matrix} (FIM) for supervised learning with a mean squared error (MSE) loss \citep{Pennington2018spectrum, Karakida2019normalization, Pascanu2013revisiting}. 
Let us summarize its definition and basic properties. 
Given $x \in \R^M$ and $\theta$, we consider a Gaussian probability model 
$p_\theta(y | x)  = \exp\left(- \mathcal{L}\left(f_\theta(x) -y\right) \right)/\sqrt{2\pi}$  $ (y \in  \R^M)$.
We define the MSE loss by  
\begin{align}
    \mathcal{L}(u) =||u||^2/2,  (u \in \R^M),
\end{align}
where $||\cdot ||$ is the Euclidean norm.
In addition, consider  a probability distribution $p(x)$ and
 a joint distribution  $p_\theta(x,y) = p_\theta(y|x)p(x)$.
Then, the FIM is defined by
 $   \mathcal{I}(\theta) = \int  [\nabla_\theta \log p_\theta(x,y )^\top \nabla_\theta \log p_\theta(x,y)] p_\theta(x,y)dxdy$, which is an $LM^2 \times LM^2$ matrix.
Now,  we denote by $\mc{I}(\theta | x)$ the \emph{conditional FIM} (or pointwise FIM) given a single input $x$ defined by 
\begin{align}
    &\mc{I}(\theta | x)  =  \nonumber\\
    &\int [\nabla_\theta \log p_\theta(y | x)^\top \nabla_\theta \log p_\theta(y | x)] p_\theta(y|x)dy.
\end{align}
Since $p_\theta(y|x)$ is Gaussian, the conditional FIM is equal to
\begin{align}\label{align:cfim}
 \mc{I}(\theta | x) = \pp{ f_\theta(x)}{\theta}^\top \pp{f_\theta(x)}{\theta}.
\end{align}
We mainly investigate this conditional FIM in the following analysis. 
Since the distribution $p(x)$ of the input does not depend on $\theta$, the FIM is given by
\begin{align}\label{align:fim}
    \mc{I}(\theta)  = \int \pp{ f_\theta(x)}{\theta}^\top \pp{f_\theta(x)}{\theta}p(x)dx.
\end{align}
We regard $p(x)$ as an empirical distribution of input samples and the FIM \eqref{align:fim} is usually referred to as the empirical FIM \citep{kunstner2019limitations,Pennington2018spectrum,Karakida2019normalization}.
%
As is known in information geometry \citep{Amari2016}, the FIM works as a degenerate metric on the parameter space: the Kullback-Leibler divergence between the statistical model and itself perturbed by $d\theta$ is given by
$ D_{\mathrm{KL}}(p_\theta || p_{\theta+ d\theta}) = d\theta^\top   \mathcal{I}(\theta) d\theta.$ 
More intuitive understanding is that we can write the Hessian of the loss as 
 $\pp{}{\theta}^2\E_{x,y}[\mathcal{L}(f_\theta(x) - y)] \notag     =    \mathcal{I}(\theta) + \E_{x,y}[ (f_\theta(x) -y )^\top     \pp{}{\theta}^2  f_\theta(x) ]$.
Hence the FIM also characterizes the local geometry of the loss surface around a global minimum with a zero training error.

\paragraph{Dual Fisher Information Matrix}
Now, in order to ignore  $\mathcal{I}(\theta|x)$'s  trivial eigenvalue zero,
we introduce the \emph{dual conditional FIM} given by
\begin{align}\label{align:H-simplfied}
    H_L(x,\theta) = \frac{1}{M} \pp{f_\theta(x)}{\theta} \pp{f_\theta(x)}{\theta}^\top,
\end{align}
which is an $M \times M$ matrix.
If there is no confusion, we omit the arguments and denote it by $H_L$. 
Except for trivial zero eigenvalues, $\mathcal{I}(\theta|x)/M$ and $H_L(x,\theta)$ share the same eigenvalues as the following:
\begin{align}\label{align:dual-formula}
    \mu_{I(\theta|x)/M} = \frac{LM^2 -M }{LM^2} \delta_0 + \frac{1}{L} \mu_{H_L(x, \theta)},
\end{align}
where $\mu_A$ is the spectral distribution for a matrix $A$.
Note that we multiplied the normalization factor $1/M$ in \eqref{align:H-simplfied} because the loss $\mc{L}(y)$ is $O(M)$ as $M \to \infty$ when the output $y$ has the constant order second moments.

\subsection{Free Probability Theory}
\paragraph{Freeness}
\com{Free probability theory gives an asymptotic analysis of families of random matrices in the infinite-dimensional limit.
The asymptotic freeness is a vital notion of free probability theory to separate the random matrices' spectral analysis into their respective spectral analysis. We refer readers to the supplemental material C and \citep{voiculescu1992free, Mingo2017free} for more information on the asymptotic freeness.}

\paragraph{S-transform}
Given probability distribution $\nu$, set $G_\nu(z )  = \int (z-t)^{-1} \nu(dt)$ and $h_\nu(z) = z G_\nu(z) -1$. Then the S-transform   \citep{Voiculescu1987multiplication}  of $\nu$ is defined as 
\begin{align}
    S_\nu(z) =  \frac{1+z}{z}\frac{1}{h_\nu^{-1}(z)}.  \label{align:S}
\end{align}
For example, given discrete distribution $\nu = \alpha\delta_0 + (1- \alpha)\delta_\gamma$ with $0\leq\alpha \leq 1$ and $\gamma>0$, we have
 $ S_\nu(z) = \gamma^{-1}(z+\alpha)^{-1}(z+1)$.
If two operators $A$ and $B$ are free and each spectral distribution is given by $\mu$ and $\nu$ respectively, then the spectral distribution of $AB$ is given by the free multiplicative convolution, denoted by $\mu \boxtimes \nu$ \citep{Voiculescu1987multiplication}. Moreover, it holds that
\begin{align}\label{align:S-trans-decompose}
    S_{\mu \boxtimes \nu}(z) =  S_{\mu}(z)S_{\nu}(z).
\end{align}

\section{Propagation  of Spectral Distributions}
\subsection{Recursive Equations}
We use several assumptions in the mean-field theory of neural networks  \citep{Pennington2017Resurrecting, Pennington2018emergence, Karakida2019universal} used in the analysis of dynamical isometry.
Firstly, we assume that  $W_\ell/\sigma_\ell$ are independent and  uniformly distributed on $M \times M$ orthogonal matrices, where $\sigma_1, \dots, \sigma_L > 0$ are constant. 
Secondly, set
\begin{align}
\hat{q}_\ell = ||x_{\ell}||^2/M.     
\end{align}
Assume that $\hat{q}_0$ converges to $q_0 > 0$.
With an appropriate choice of activation function, the empirical distribution of each hidden unit $x_\ell$ converges to the centered normal distribution \citep{Pennington2018emergence}.
Set $q_\ell = \lim_{M\to\infty} \hat{q}_\ell$.
Lastly, we assume the following asymptotic freeness. 
\begin{assumption}\label{assmptn:freeness}
We assume that $( D_\ell)_{\ell=1}^{L-1}$ is asymptotically free from $(W_\ell, W_\ell^\top)_{\ell=1}^L$ as $M\to \infty$ almost surely.
\end{assumption}
Note that  \cref{assmptn:freeness}  is weaker than the assumption of the forward-backward independence that researches of dynamical isometry assumed  \citep{Pennington2018emergence, Pennington2017Resurrecting, Karakida2019universal}.
Several works prove or treat the asymptotic freeness with Gaussian initialization \citep{Hanin2019products, Yang2019scaling, Pastur2020random}, and we expect that it will also hold with orthogonal initialization.

Now we have prepared to discuss the propagation of spectral distributions.
It holds that
\begin{align}
    \Hmltn_L =  \sum_{\ell=1}^L \hat{q}_{\ell-1}\delta_{L \to \ell} \delta_{L \to \ell}^\top.
\end{align}
Since $ \delta_{L \to \ell} = W_L D_{L-1}\delta_{L-1 \to \ell}$ $(\ell < L)$, it holds that
\begin{align}\label{align:reccurence_matrix}
\Hmltn_{\ell +1 } =   \hat{q}_{\ell}I + W_{\ell+1} D_{\ell}\Hmltn_{\ell}D_{\ell}W_{\ell+1}^\top, 
\end{align}
where $I$ is the identity matrix.
Let $\mu_\ell$ (resp.\,$\nu_\ell$) be the limit spectral distribution as $M\to\infty$ of $H_\ell$ (resp.\,$D_\ell^2$). Note that $\mu_1 = \delta_{q_0}$.
By \cref{assmptn:freeness}, we have the following propagation equation of limit spectral distributions.  For $\ell=1, \dots, L-1$, we have
\begin{align}\label{align:reccurence_spec}
    \mu_{\ell+1} =  \affine{\sigma_{\ell+1}^2}{q_\ell}( \nu_\ell \boxtimes \mu_\ell ),
\end{align}
where  the distribution $(b+a\,\cdot\,)_*\mu$ is the pushforward of $\mu$ with the map $x \mapsto b + ax$ for a given distribution $\mu$. 

\subsection{An Example: The Two-Hidden-Layer Case}
To show a nontrivial example, we examine the solvable asymptotic spectrum of the conditional FIM in the case of a two-hidden-layer network (i.e.\,$L=3$).   
Assume that \begin{align}\label{align:dist-of-nu}
     \nu_\ell =  (1-\alpha_\ell)\delta_0 + \alpha_\ell \delta_{\gamma_\ell} , 
\end{align}
where $0< \alpha_\ell < 1$ and $\gamma_\ell>0$.
We get the distribution \eqref{align:dist-of-nu} if we choose activation  as the shifted-ReLU ($\act(x) = ax $ if $x > b$ 
otherwise $ab$ with $a,b>0$) or the hard tanh  given by
\begin{align} \label{align:Nhtanh}
\act_{s,g}(x) = \begin{cases} 
g x, & \text{ if } sg|x| < 1,\\
g \cdot \mathrm{sgn}(x), & \text{otherwise},
\end{cases}
\end{align}
where $s,g>0$. These activation functions appear in \citep{Pennington2018emergence} for dynamical isometry.
Then we have the following explicit representation of the $H_3$'s asymptotic spectral distribution $\mu_3$.
\begin{thm}\label{thm:two-hidden-layer}
We have $\mu_3(dx) = \mu_\mr{atoms}(dx) + \rho(x)dx$, where
  \begin{align} 
\mu_\mr{atoms} &= (1- \alpha_2)\delta_{\lambda_{\min}} 
+ (\alpha_2 - \alpha_1)^+ \delta_{\lambda_\mathrm{mid}} \notag \\
&+   (\alpha_1 + \alpha_2 -1)^+ \delta_{ \lambda_{\max} }(dx), \label{thm:two-hidden-layer:main}\\
\rho(x) &=  \frac{\sqrt{  (\lambda_+ -x)(x - \lambda_- ) }}{2\pi (x- \lambda_\mathrm{mid}  )(  \lambda_{\max}  - x)}{\bf 1}_{[\lambda_-, \lambda_+] }(x), 
\end{align}
with the following notations: $a^+ =  \max(a,0)$ for $a \in \R$, ${\bf 1}_X$ is the indicator function for $X \subset \R$, 
\begin{align}
    \lambda_{\min} &= q_2, \label{thm:two-hidden-layer:min}\\
    \lambda_\mathrm{mid} &= q_2 +\sigma_3^2\gamma_2 q_1, \label{thm:two-hidden-layer:mid}\\
    \lambda_\pm   &=  q_2 +   \notag \\
     &\sigma_3^2\gamma_2   \{ q_1 +\notag \\
     &\sigma_2^2\gamma_1 q_0 [ \sqrt{\alpha_1(1 - \alpha_2)} \pm\sqrt{ \alpha_2(1 - \alpha_1)} ]^2 \}, \label{thm:two-hidden-layer:pm} \\
    \lambda_{\max} &= q_2 +   \sigma_3^2\gamma_2( q_1  +   \sigma_2^2\gamma_1 q_0).\label{thm:two-hidden-layer:max}
\end{align}

\end{thm}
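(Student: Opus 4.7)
The plan is to unwind the recursion \eqref{align:reccurence_spec} twice, starting from $\mu_1 = \delta_{q_0}$. Since $\mu_1$ is a Dirac mass, $\nu_1 \boxtimes \mu_1$ degenerates to a rescaling, yielding $\nu_1 \boxtimes \mu_1 = (1-\alpha_1)\delta_0 + \alpha_1 \delta_{q_0 \gamma_1}$, and applying the affine pushforward $\affine{\sigma_2^2}{q_1}$ gives the two-atom measure $\mu_2 = (1-\alpha_1)\delta_{q_1} + \alpha_1\delta_{q_1 + \sigma_2^2 q_0 \gamma_1}$, both of whose atoms are strictly positive so that $\mu_2$ carries no mass at zero.

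Next I would compute $\nu_2 \boxtimes \mu_2$ via the multiplicativity \eqref{align:S-trans-decompose} of the $S$-transform. The formula in the preliminaries gives $S_{\nu_2}(z) = \gamma_2^{-1}(z+1-\alpha_2)^{-1}(z+1)$. For $S_{\mu_2}$ I would write $G_{\mu_2}$ as a rational function with denominator $(z-q_1)(z-q_1-\sigma_2^2 q_0 \gamma_1)$, observe that $h_{\mu_2}(z) = z G_{\mu_2}(z)-1$ is a ratio of degree-two polynomials, and invert the resulting quadratic to obtain $S_{\mu_2}$. Multiplying the two $S$-transforms and using $h^{-1}(z)=(z+1)/(z S(z))$ yields $h^{-1}_{\mu_2 \boxtimes \nu_2}$, and hence $G_{\mu_2 \boxtimes \nu_2}$ after solving a final rational equation in $z$. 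The affine pushforward $\affine{\sigma_3^2}{q_2}$ then delivers $G_{\mu_3}$.

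The atomic and continuous parts of $\mu_3$ are read off $G_{\mu_3}$. For the atoms, I would invoke the classical rule for free multiplicative convolution of compactly supported positive measures: an atom at a product $ab$ (with $a,b>0$) of $\mu \boxtimes \nu$ carries mass $\max(\mu(\{a\}) + \nu(\{b\}) - 1, 0)$, while the atom at $0$ is inherited from $\nu_2$ with full mass $1-\alpha_2$. Applied to $\mu_2$ and $\nu_2$ this predicts atoms of $\nu_2 \boxtimes \mu_2$ at $0$, $\gamma_2 q_1$, and $\gamma_2(q_1 + \sigma_2^2 q_0 \gamma_1)$ with masses $1-\alpha_2$, $(\alpha_2-\alpha_1)^+$, $(\alpha_1+\alpha_2-1)^+$, which become $\lambda_{\min}$, $\lambda_{\mathrm{mid}}$, $\lambda_{\max}$ after the affine pushforward. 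The continuous density follows from Stieltjes inversion $\rho(x) = -\pi^{-1}\lim_{\varepsilon\downarrow 0} \operatorname{Im} G_{\mu_3}(x+i\varepsilon)$: the discriminant of the quadratic inverting $h_{\mu_2 \boxtimes \nu_2}$ produces the factor $\sqrt{(\lambda_+ - x)(x - \lambda_-)}$, while the rational prefactor $[2\pi(x-\lambda_{\mathrm{mid}})(\lambda_{\max}-x)]^{-1}$ arises from the two real poles of $G_{\mu_3}$ that sit outside the branch cut $[\lambda_-, \lambda_+]$.

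The main technical obstacle will be bookkeeping: distinguishing genuine poles of $G_{\mu_3}$, which contribute a positive residue and hence an atom, from spurious poles that degenerate when one of $(\alpha_2-\alpha_1)^+$ or $(\alpha_1+\alpha_2-1)^+$ vanishes, together with selecting the correct branch of the square root in the $S$-transform inversion. Verifying the inequalities $\lambda_{\mathrm{mid}} \leq \lambda_- \leq \lambda_+ \leq \lambda_{\max}$, which are needed for nonnegativity of $\rho$ and for the support structure to be as claimed, will require a short case analysis separating the regimes $\alpha_1+\alpha_2<1$, $=1$, and $>1$.
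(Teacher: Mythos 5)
Your strategy—unwind \eqref{align:reccurence_spec} twice, reduce everything to a single free multiplicative convolution of the two-atom measures $\mu_2$ and $\nu_2$, recover $G_{\mu_3}$ via multiplicativity of the $S$-transform, then use Stieltjes inversion for the density and the Bercovici--Voiculescu/Belinschi atom rule for the point masses—is exactly the route the paper takes, and your $\mu_2$ and your bookkeeping of the three atoms with weights $1-\alpha_2$, $(\alpha_2-\alpha_1)^+$, $(\alpha_1+\alpha_2-1)^+$ are correct. However, there is one concrete error that would derail the computation if carried through: the $S$-transform of $\nu_2=(1-\alpha_2)\delta_0+\alpha_2\delta_{\gamma_2}$ is $S_{\nu_2}(z)=\gamma_2^{-1}(z+\alpha_2)^{-1}(z+1)$, not $\gamma_2^{-1}(z+1-\alpha_2)^{-1}(z+1)$; the shift in the denominator is the mass of the \emph{nonzero} atom (cf.\ the free-projection formula $S(z)=(z+1)/(z+t)$ with $t=\mu(\{1\})$). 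You took the example formula in the preliminaries at face value, but that display contains a typo ($z+\alpha$ should read $z+1-\alpha$ when $\alpha$ is the mass at $0$); the appendix of the paper correctly uses $S_{\tilde{\nu}_2}(z)=(z+1)/(z+\alpha_2)$. With your version, $\alpha_2$ and $1-\alpha_2$ get interchanged in the convolution, and the resulting atom weights and edge locations $\lambda_\pm$ would contradict \eqref{thm:two-hidden-layer:main} and \eqref{thm:two-hidden-layer:pm}.

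Two smaller points. Reading the atoms off the real poles of $G_{\mu_3}$ and the density off its boundary imaginary part does not by itself prove $\mu_3=\mu_{\mathrm{atoms}}+\rho\,dx$; you must also exclude a singular continuous component (the paper invokes \cite{Belinschi2003atoms} for this) or verify that the masses you computed sum to one. Also, the inequalities $\lambda_{\mathrm{mid}}\le\lambda_-\le\lambda_+\le\lambda_{\max}$ need no case analysis on $\alpha_1+\alpha_2$: they follow from $[\sqrt{\alpha_1(1-\alpha_2)}\pm\sqrt{\alpha_2(1-\alpha_1)}]^2\ge 0$ together with the Cauchy--Schwarz bound $(\sqrt{\alpha_1(1-\alpha_2)}+\sqrt{\alpha_2(1-\alpha_1)})^2\le(\alpha_1+(1-\alpha_1))((1-\alpha_2)+\alpha_2)=1$. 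Finally, as a practical simplification you can avoid explicitly inverting the quadratic $h_{\mu_2}$: as in the paper, rewrite $S_{\mu_2\boxtimes\nu_2}=S_{\mu_2}S_{\nu_2}$ as the subordination-type equation $w=h_{\mu_2}(zS_{\nu_2}(w))$, which only needs the M\"obius inverse of $h_{\nu_2}$ and reduces directly to a quadratic in $w$.
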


\begin{proof}
The proof is based on \eqref{align:S-trans-decompose} and  \eqref{align:reccurence_spec},  and is postponed to the supplemental material A.
\end{proof}

\begin{figure*}[t]
    \centering
    \includegraphics[width=0.32\linewidth]{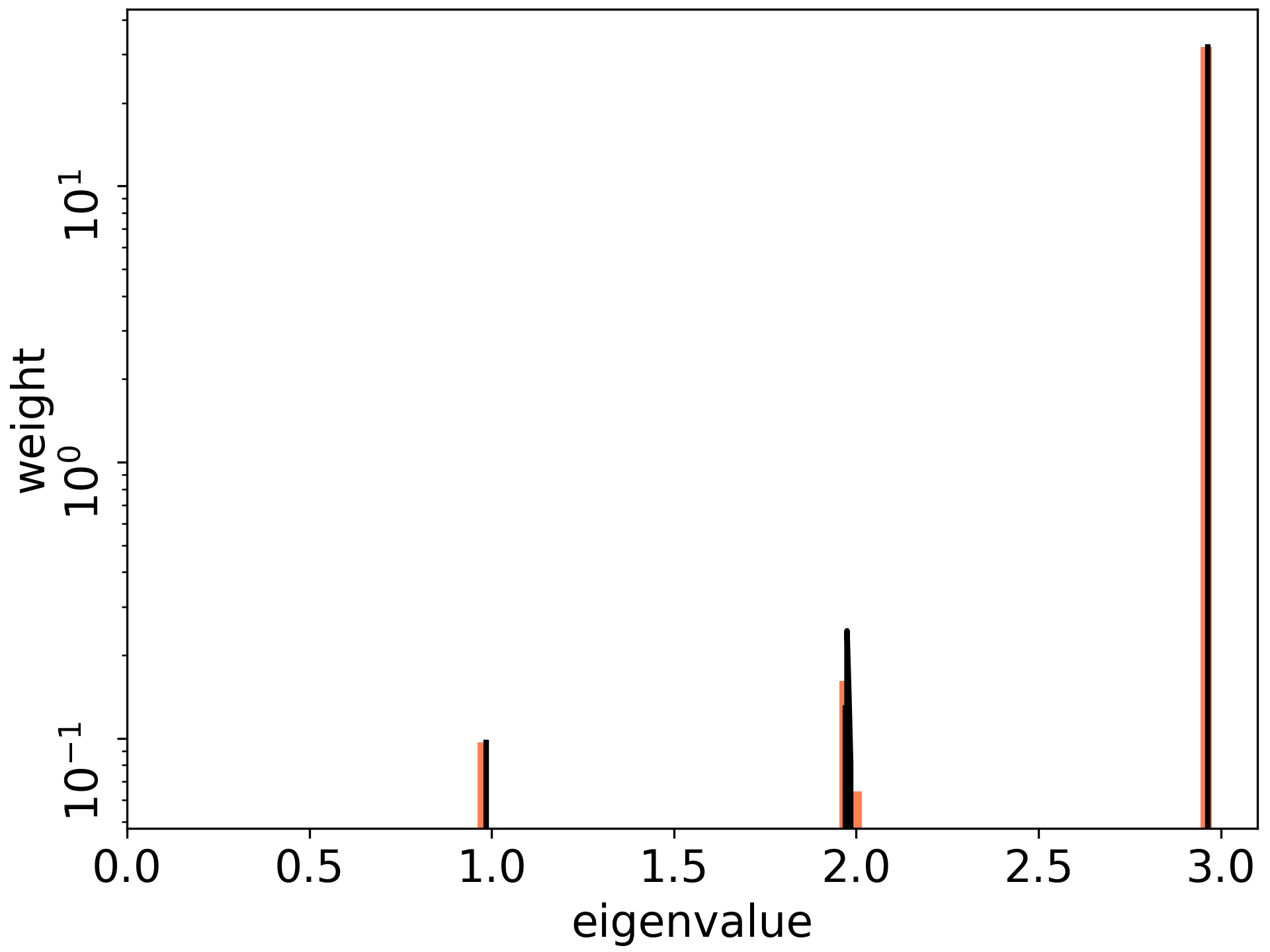}
    \includegraphics[width=0.32\linewidth]{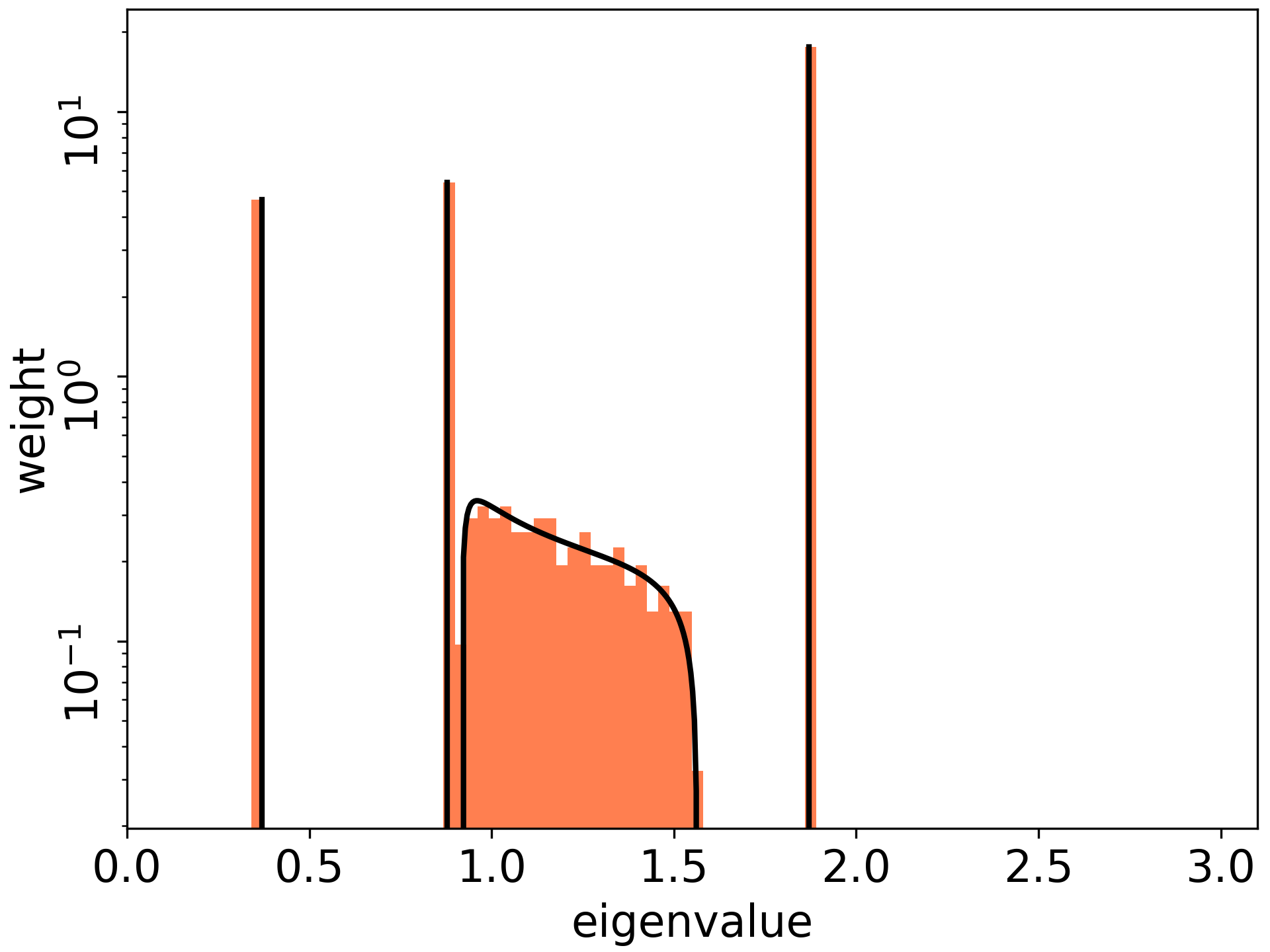}
    \includegraphics[width=0.32\linewidth]{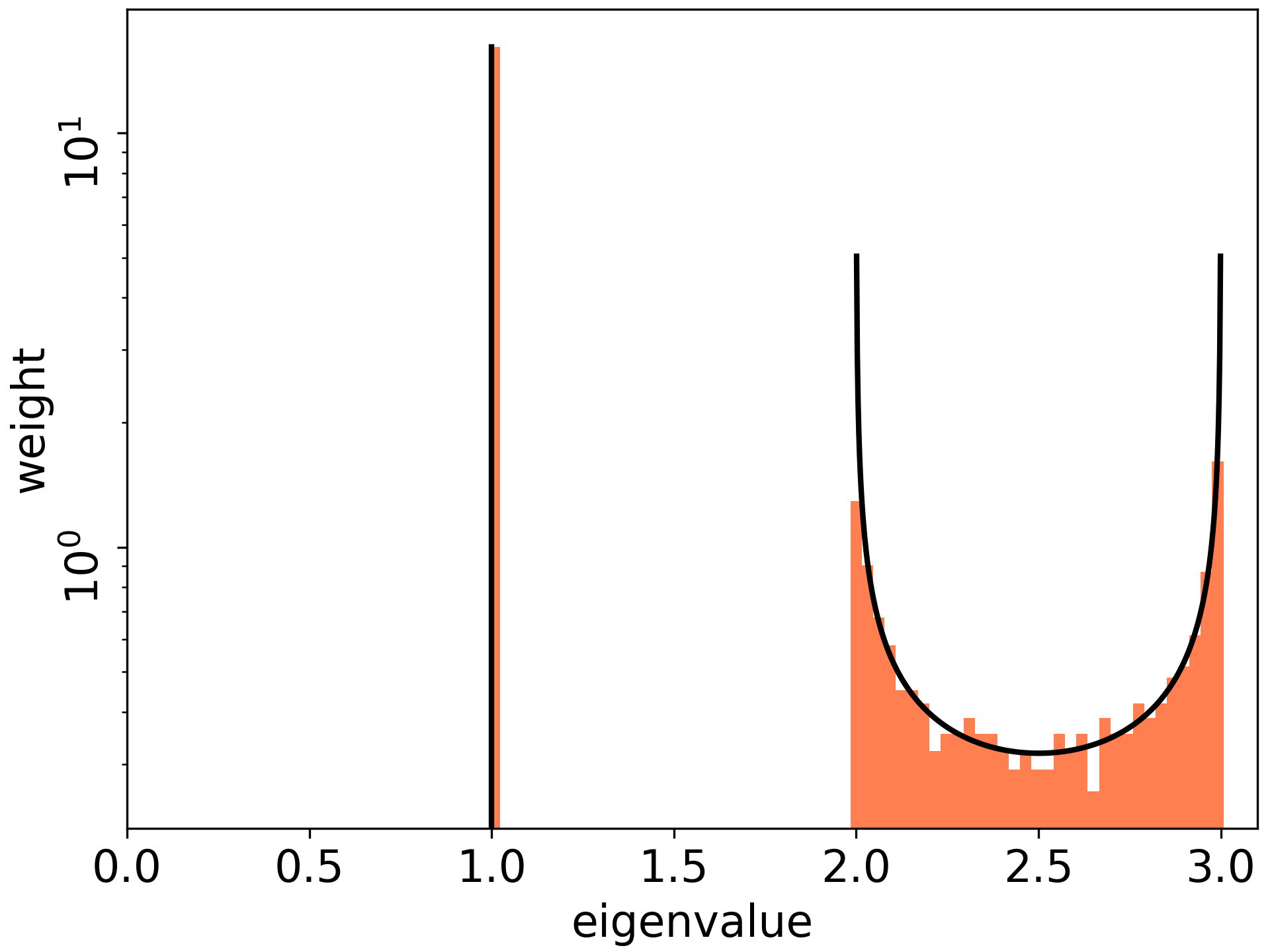}
    \caption{ Normalized histograms of eigenvalues of $H_3$ (orange) and  predicted $\mu_3$ by \cref{thm:two-hidden-layer} (black lines). The y-axis in each figure is logarithmic.
    We set $M=1000$, $\hat{q}_0=\sigma_\ell=1$, the width of bins $0.31$, and used the following setting.
    (Left): We used the hard tanh activation \eqref{align:Nhtanh} with $s=0.125$ and $g=1.0013$ to achieve dynamical isometry.
    (Center): We used the hard tanh with $s=g=1$. 
    (Right):  We constructed $H_L$ based on \eqref{align:reccurence_matrix} by replacing each Jacobian $D_\ell$ with an independent matrix whose spectral distribution is $(1/2)\delta_0 + (1/2)\delta_1 $. 
    }
    \label{fig:two-hidden-layer}
\end{figure*}

\cref{fig:two-hidden-layer} shows the agreement of the predicted distribution $\mu_3$ by \cref{thm:two-hidden-layer} and the empirical spectral distribution of $H_3$ . 

\cref{thm:two-hidden-layer} \eqref{thm:two-hidden-layer:max} reveals that the maximal eigenvalue is close to three, which is the number of layers, when \eqref{align:sigma-gamma-one} is satisifed and $q_*=1$.
Further, if \eqref{align:alpha-one} is satisfied,  then $(\alpha_1 + \alpha_2 -1)^+$ is close to one  and the eigenvalues concentrate on the maximal eigenvalue.
We observed in \cref{fig:two-hidden-layer} (left) that most of the eigenvalues concentrated at the value of depth when a DNN achieved dynamical isometry approximately, but there were other peaks. Later, we show in \cref{thm:maximum} that the eigenvalues concentrate on the value of depth at the large depth. 

Additionally,  \cref{thm:two-hidden-layer} \eqref{thm:two-hidden-layer:main}  reveals that the weight of the minimum eigenvalue depends on how much the last activation's derivation vanishes.

\cref{thm:two-hidden-layer} also gives insight into the spectrum of the DNN out of the dynamical isometry.
Although \cref{fig:two-hidden-layer} (right) is also out of dynamical isometry,  the spectrum obeys the arcsin law known in FPT \citep{voiculescu1992free} and is interesting its own right. 
Once the hyperparameters are standardized and each $D_\ell$ is a projection, the spectrum is attributed to the product of free two projections well examined in FPT.

\section{Analysis through Approximate Dynamical Isometry}
\subsection{Assumptions}\label{ssec:assumptions}
Let us review on how to achieve the dynamical isometry.
For the sake of the prospect of the theory, let $J$ be the Jacobian of the network with ignoring the last layer $\pre^L \mapsto \post^L = W_L \pre^L$.
We say that the network achieves dynamical isometry if all eigenvalues of $JJ^T$ are one.
Consider the limit spectral distribution $\nu_\ell$ of $D_\ell^2$ given by \eqref{align:dist-of-nu}.
At the deep limit, we consider the situation such that forward and backward signal propagation are stable. Hence  we adopt the following assumption,
\begin{assumption}\label{assumption:simplify}\citep{Pennington2018emergence}
For each $L \in \N$,  sequences $q_\ell, \alpha_\ell, \gamma_\ell$ and $\sigma_{\ell+1}$ ($\ell=0,1,\dots,L-1$) are constant for every $\ell$, but depend on $L$.  In addition, each constant converges to a finite value as $L\to\infty$.  
\end{assumption}

Now the limit spectral distribution $\mu_{J^TJ}$ of  $J^\top J$ as $M \to \infty$ is given by $ [(\sigma_{L-1}^{2}\,\cdot\,)_* \nu_{L-1}]^{\boxtimes (L-1)}$. By \eqref{align:S-trans-decompose}, 
\begin{align}
S_{\mu_{J^\top J}}(z) = (\frac{1}{ \sigma_{L-1}^{2} \gamma_{L-1}}    (1 +  \frac{1 - \alpha_{L-1}}{z +\alpha_{L-1}}))^{L-1}.
\end{align}
In order to achieve dynamical isometry, we need that $[(\sigma_{L-1}^{2}\,\cdot\,)_* \nu_{L-1}]^{\boxtimes (L-1)}$ converges to a compactly supported distribution as $L \to \infty$, and $S_{JJ^T}(z)$ converges to a non-zero function of $z$.  
Since the right hand side is approximated by $\exp (  L ( 1- \alpha_{L} )(z+\alpha_{L})^{-1}  - L\log \sigma_L^2\gamma_L) )$ as $L \to \infty$, we need
\begin{align}
 \log \sigma_L^2 \gamma_L  &=  O(L^{-1}) \label{align:sigma-gamma-one}\\    
 1 -\alpha_L &= O(L^{-1}) \label{align:alpha-one}
\end{align}
 and  as $L \to \infty$.

Now, the exact dynamical isometry, which means that the input-output Jacobian is an orthogonal matrix, is too strong. Thus we consider the following approximate condition. 
\begin{assumption}\label{assumption:eps}
We assume that the following limits exist and $|\eps_1|, |\eps_2| < 1$:
\begin{align}
\eps_1 &= \lim_{L\to\infty}L(1-\alpha_L),\\
\eps_2 &= -\lim_{L\to\infty}L \log\sigma_{L}^2\gamma_L.
\end{align}
\end{assumption}
To achieve exact dynamical isometry, we need  $\eps_1 = \eps_2 = 0$.  \cref{assumption:eps} is implicitly assumed in \cite{Pennington2018emergence}.

\subsection{Concentration around the Maximal Eigenvalue of the Conditional FIM}
We investigate $||\mu_L||_\infty$, where we denote by $||\nu||_\infty$ the maximum of the support of $\nu$ for a compactly supported probability distribution $\nu$ on $\R_+$.
We show that $||\mu_L||_\infty$ is $O(L)$ as $L \to \infty$. 
We discuss the intuitive reason for this ahead of time.
Now, $\nu_\ell$  is close to a delta measure at a point to achieve dynamical isometry. 
Then the recursive equation \eqref{align:reccurence_spec}  looks like an affine transform.
Therefore, $||\mu_L||_\infty$ is the result of $L$-times affine transforms, hence  it is $O(L)$ as $L \to \infty$. 

A key of the proof is to show that $||\mu_L||_\infty$ is an atom of $\mu_L$ under \cref{assumption:eps}, which is an assumption to achieve approximate dynamical isometry.
Recall that $x \in \R$ is an atom of a probability distribution $\nu$ if and only if $\nu(\{x\}) > 0$.
Here we sketch the proof of the phenomenon.
Firstly, the support of $\nu_\ell$, which is the limit spectral distribution of $D_\ell^2$, consists of two atoms $0$ and $\gamma_\ell$, and the weight $\alpha_\ell$ of $\gamma_\ell$  is close to one for sufficiently large $\ell$.
Secondly, we recall the following fact of the free multiplicative convolution: if $a$ (resp.\,$b$) is an atom of a probability  distribution $\mu$ (resp.\,$\nu$) and $\mu(\{a\}) + \nu(\{b\}) - 1 >0$, then $ab$ is an atom of $\mu \boxtimes \nu$ and 
\begin{align}\label{align:free-multi-conv}
    \mu \boxtimes \nu ( \{ab\}) = \mu(\{a\}) + \nu(\{b\}) - 1.
\end{align}
Thus, if $||\mu_\ell||_\infty$ is an atom of $\mu_\ell$ with sufficiently large weight, then $||\nu_\ell||_\infty || \mu_\ell||_\infty$ is an atom of $ \nu_\ell \boxtimes \mu_\ell$.
Note that \eqref{align:free-multi-conv} is different from that in the computation rule of the classical convolution.
Additionally, by the recursive equation \eqref{align:reccurence_spec}, we have the following recursive equation:
\begin{align}\label{align:recursive_max}
    ||\mu_{\ell+1}||_\infty = q_\ell  +  \sigma_{\ell+1}^2  ||\nu_\ell ||_\infty ||\mu_\ell||_\infty.
\end{align}
This recursive equation allows us to use induction.
Then we have our theorem.
\begin{thm}\label{thm:maximum}
Consider \cref{assumption:simplify}  and  \ref{assumption:eps}.
Then for sufficiently large $L$,  it holds that $||\mu_L||_\infty$  is an atom of $\mu_L$ with weight $1- (L-1)(1-\alpha_{L-1})$,  and 
\begin{align}
     \lim_{L\to\infty}L^{-1}||\mu_L||_\infty  = q  \frac{  1 - \exp\left(-\eps_2\right) }{ \eps_2}.
\end{align}
In particular, the limit has the expansion $q(1 - \eps_2/2) + O(\eps_2^2)$ as the further limit $\eps_2 \to 0$.
\end{thm}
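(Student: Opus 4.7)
The plan is to prove both assertions simultaneously by induction on $\ell$, combining the recursive equation \eqref{align:reccurence_spec} with the atom rule \eqref{align:free-multi-conv} for free multiplicative convolution. For the base case, $\mu_1 = \delta_{q_0}$, so $\|\mu_1\|_\infty = q_0$ is an atom of weight $w_1 = 1$.

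For the inductive step, assume $a_\ell := \|\mu_\ell\|_\infty$ is an atom of $\mu_\ell$ with weight $w_\ell$. By \eqref{align:dist-of-nu}, $\gamma_\ell = \|\nu_\ell\|_\infty$ is an atom of $\nu_\ell$ of weight $\alpha_\ell$. Provided $w_\ell + \alpha_\ell > 1$, the rule \eqref{align:free-multi-conv} shows that $\gamma_\ell a_\ell$ is an atom of $\nu_\ell \boxtimes \mu_\ell$ with weight $w_\ell + \alpha_\ell - 1$. Since $\nu_\ell$ and $\mu_\ell$ are spectral distributions of positive semi-definite operators, a standard norm bound gives $\mathrm{supp}(\nu_\ell \boxtimes \mu_\ell) \subseteq [0, \gamma_\ell a_\ell]$, so this atom is actually the maximum of the support. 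The affine pushforward in \eqref{align:reccurence_spec} is a bijection on $\R$ preserving atoms and weights, so $a_{\ell+1} = q_\ell + \sigma_{\ell+1}^2 \gamma_\ell a_\ell$ is an atom of $\mu_{\ell+1}$ with weight $w_{\ell+1} = w_\ell + \alpha_\ell - 1$. Unrolling this linear weight recursion and using \cref{assumption:simplify} (constancy of $\alpha_\ell$ in $\ell$), I get $w_L = 1 - (L-1)(1-\alpha_{L-1})$, matching the claim. To close the induction, I verify the condition $w_\ell + \alpha_\ell > 1$ at every step $\ell \leq L-1$: \cref{assumption:eps} gives $L(1 - \alpha_{L-1}) \to \eps_1 < 1$, hence $\alpha_{L-1}/(1-\alpha_{L-1}) \sim L/\eps_1 > L - 2$ for all sufficiently large $L$, which is exactly the required inequality.

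For the asymptotic statement, write $q := q_{L-1}$ and $r := \sigma_L^2 \gamma_{L-1}$ (all constant in $\ell$ under \cref{assumption:simplify}), and solve the recursion \eqref{align:recursive_max} explicitly: for $r \neq 1$,
\begin{equation*}
  a_L \;=\; q\,\frac{1 - r^{L-1}}{1 - r} \;+\; r^{L-1} q_0.
\end{equation*}
\cref{assumption:eps} gives $\log r = -\eps_2/L + o(1/L)$, so $r^{L-1} \to e^{-\eps_2}$ and $L(1-r) \to \eps_2$, yielding
\begin{equation*}
  L^{-1} a_L \;\longrightarrow\; q\,\frac{1 - e^{-\eps_2}}{\eps_2},
\end{equation*}
where $q = \lim_L q_{L-1}$. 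The Taylor expansion $(1 - e^{-x})/x = 1 - x/2 + O(x^2)$ at $x = 0$ then gives the stated $q(1 - \eps_2/2) + O(\eps_2^2)$.

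The main obstacle is the inductive step of identifying $\|\nu_\ell \boxtimes \mu_\ell\|_\infty$ with $\gamma_\ell a_\ell$: the atom rule \eqref{align:free-multi-conv} only places a point mass at $\gamma_\ell a_\ell$ inside the support but does not, by itself, prevent the support from extending further to the right. This is the one place where positivity of the factor operators (and the resulting operator-norm bound on free products) is essential, and it is precisely what converts \eqref{align:reccurence_spec} into the scalar recursion \eqref{align:recursive_max}. Once that identification is secured, the remainder of the argument is elementary induction and a geometric-series limit.
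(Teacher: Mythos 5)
Your proposal is correct and follows essentially the same route as the paper's proof: induction on $\ell$ using the atom rule \eqref{align:free-multi-conv} (Belinschi's theorem) together with the bound $||\nu_\ell \boxtimes \mu_\ell||_\infty \le ||\nu_\ell||_\infty\,||\mu_\ell||_\infty$ to identify the atom with the support maximum, followed by the explicit geometric-series limit. The one point you flag as the "main obstacle" is handled in the paper exactly as you handle it, so there is nothing to add.
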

The proof is postpone to Appendix D.


Recall that $\mu_L$ is equal to the limit distribution of eigenvalues of the conditional FIM $I(\theta|x)$ except for $LM^2 -M$ zeros.
\cref{thm:maximum} shows that the maximum of the support is $O(L)$ as $L \to \infty$.
Furthermore, we emphasize that the weight $1 - (L-1)(1-\alpha_{L-1}) \sim 1 - \eps_1$ of the maximal eigenvalue $||\mu_{L}||$ is close to $1$.
Therefore, non-zero eigenvalues of the conditional FIM  asymptotically concentrates around $qL(1-\eps_2/2)$. 
In particular,  the non-degenerate-part of the conditional FIM is approximated by a scaled identity operator.

\subsection{Expected vs Conditional FIM}\label{sec:block-diagonal}
While we have investigated the conditional FIM \eqref{align:cfim} so far, it will be curious to show how the obtained eigenvalue statistics could be related to those of the FIM \eqref{align:fim}.
We investigate the analysis of mean eigenvalues of both the conditional FIM and the usual FIM.
Analysis of the mean is easier than analysis of the maximum value, but is done to observe the effect of the dynamic isometry.
Now, we denote by $m_1(\mu)$ the mean of a distribution $\mu$ given by $m_1(\mu) = \int x \mu(dx)$.
We show that the mean of spectrum is $O(L)$ as $L \to \infty$.

Applying the decomposition formula of the free multiplicative convolution $m_1(\mu \boxtimes \nu)=m_1(\mu)m_1(\nu)$ to \cref{align:reccurence_spec}, we have the following recurrence formula of the mean:
\begin{align}
    m_1(\mu_{\ell + 1}) = q_\ell + \sigma_{\ell+1}^2 m_1(\nu_\ell) m_1(\mu_\ell).
\end{align}
Then we have the followings.
\begin{prop}\label{prop:expectation}
Under  \cref{assumption:simplify} and \ref{assumption:eps}, it holds that
\begin{align}
\lim_{L \to \infty}L^{-1} m_1(\mu_L)  =  q \frac{1-  \exp(-\eps_1 - \eps_2)}{\eps_1 + \eps_2},
\end{align}
where $q = \lim_{L\to\infty}q_L$. In particular, the limit has the expansion $ q (1 -  (\eps_1 + \eps_2)/2) + O( (\eps_1 + \eps_2)^2 )$ as  $\eps_1, \eps_2 \to 0$.
\end{prop}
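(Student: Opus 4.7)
The plan is to exploit that, under \cref{assumption:simplify}, all layer parameters are $\ell$-independent for each fixed $L$, so the mean recurrence becomes an elementary linear recursion with constant coefficients that can be summed explicitly; one then reads off the $L\to\infty$ asymptotics using \cref{assumption:eps}.

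First I would substitute $\nu_\ell = (1-\alpha_\ell)\delta_0 + \alpha_\ell\delta_{\gamma_\ell}$, so that $m_1(\nu_\ell)=\alpha_\ell\gamma_\ell$, and write $r_L := \sigma_L^2\gamma_L\alpha_L$ and $q^{(L)} := q_\ell$ (independent of $\ell$ by \cref{assumption:simplify}). The mean recursion displayed just before the proposition becomes
\begin{align}
m_1(\mu_{\ell+1}) = q^{(L)} + r_L\,m_1(\mu_\ell),
\end{align}
with initial value $m_1(\mu_1) = q_0 = q^{(L)}$, so summing the geometric series gives
\begin{align}
m_1(\mu_L) \;=\; q^{(L)}\sum_{k=0}^{L-1} r_L^{\,k} \;=\; q^{(L)}\,\frac{1-r_L^{\,L}}{1-r_L}.
\end{align}

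Next I would expand $\log r_L$ using \cref{assumption:eps}. Since $-L\log(\sigma_L^2\gamma_L)\to\eps_2$, we have $\log(\sigma_L^2\gamma_L) = -\eps_2/L + o(1/L)$, and Taylor-expanding $\log\alpha_L = -(1-\alpha_L) + O((1-\alpha_L)^2)$ together with $L(1-\alpha_L)\to\eps_1$ yields $\log\alpha_L = -\eps_1/L + o(1/L)$. Adding, $\log r_L = -(\eps_1+\eps_2)/L + o(1/L)$, hence $r_L^{\,L} = \exp(L\log r_L)\to\exp(-(\eps_1+\eps_2))$ and $1-r_L \sim -\log r_L \sim (\eps_1+\eps_2)/L$. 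Dividing by $L$ and passing to the limit,
\begin{align}
L^{-1}m_1(\mu_L) \;\longrightarrow\; q\cdot\frac{1-\exp(-(\eps_1+\eps_2))}{\eps_1+\eps_2},
\end{align}
and the small-$\eps$ expansion follows from $(1-e^{-x})/x = 1 - x/2 + O(x^2)$.

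The argument is essentially routine once the recursion is diagonalised, and no free-probability input beyond the multiplicativity $m_1(\mu\boxtimes\nu) = m_1(\mu)\,m_1(\nu)$ used just above the proposition is needed. The only mildly delicate point is controlling the $o(1/L)$ error after raising to the $L$-th power; here the bounds $|\eps_1|,|\eps_2|<1$ in \cref{assumption:eps} keep $L\log r_L$ inside a fixed compact set, and continuity of $\exp$ on compacts makes the convergence $r_L^{\,L}\to e^{-(\eps_1+\eps_2)}$ rigorous.
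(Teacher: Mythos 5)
Your proof is correct and follows essentially the same route as the paper's: both sum the constant-coefficient mean recursion into the geometric series $q_L\sum_{k=0}^{L-1}(\sigma_L^2\alpha_L\gamma_L)^k$ and then use $L(1-\sigma_L^2\alpha_L\gamma_L)\to\eps_1+\eps_2$ to identify the limit $(1-e^{-(\eps_1+\eps_2)})/(\eps_1+\eps_2)$. Your logarithmic expansion just makes explicit a convergence step the paper asserts without detail.
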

\begin{proof}
We have $m_1(\mu_{L+1}) =  \sum_{\ell=0}^{L} q_\ell (\sigma_{\ell+1}^2 \alpha_\ell \gamma_\ell )^\ell =  q_L \sum_{\ell=0}^{L} (\sigma_{L+1}^2 \alpha_{L} \gamma_{L})^\ell $ by \cref{assumption:simplify}. 
Set $x_L = L(1 - \sigma_{L+1}^2 \alpha_L \gamma_L)$. We have $\lim_{L \to \infty}x_L = \eps_1 + \eps_2$.
Then  $L^{-1}\sum_{\ell=0}^{L} (\sigma_{L+1}^2 \alpha_L \gamma_L)^\ell =   x_L^{-1}[ 1- (1 - x_L/L)^L] \to (\eps_1 + \eps_2)^{-1}( 1 - \exp(-\eps_1 - \eps_2))$. Then the assertion has been proven.
\end{proof}

Next, consider the usual FIM.
Fix $N \in \N$ and consider input vectors $x(1), \dots, x(N) \in \R^M$.
Set $\post^0(n)=x(n)$.
Since $\mc{I}(\theta) =  N^{-1}\sum_{n=1}^N \mc{I}(\theta | x(n))$, the FIM $\mc{I}(\theta)$ shares non-zero eigenvalues with the dual $\Theta$, which is the $N \times N$ matrix whose $(m,n)$-entry is given by the following $M \times M$ matrix:
\begin{align}
    \Theta(m,n) &= \frac{1}{N} \pp{ f_\theta(x(m))}{\theta} \pp{f_\theta(x(n))}{\theta}^\top,
\end{align}
where $m,n=1, \dots, N$.
For $\ell=1, \dots, L$, set recursively $\pre^\ell(n) = W_\ell \post^{\ell-1}(n)$, $\post^\ell(n) = \act^\ell(\pre^\ell(n) )$, and define $\delta_{L \to \ell}(n)$ in the same way.
Then $\Theta(m,n) =  MN^{-1} \sum_{\ell=1}^L \Sigma_\ell(m,n) \delta_{L \to \ell}(m) \delta_{L \to \ell}(n)^\top$, where $\Sigma_\ell(m,n) =  M^{-1}\sum_{i=1}^M x_{\ell,i}(m)x_{\ell,i}(n)$, and we have the following block-matrix representation:

\begin{align}
    \Theta= \frac{M}{N} \begin{bmatrix} H_L(x(1)) & * & \dots  & * \\
    * & H_L(x(2)) &  \cdots  &  * \\
    \vdots & \cdots & \ddots & \vdots \\
    * & * & \cdots  &  H_L(x(N))
    \end{bmatrix}
\end{align}
Hence for the $N$-sample, considering the collection of eigenvalues of the dual conditional FIMs $(H_L(x(n)))_{n=1}^N$ is equivalent to considering block-diagonal approximation of the (scaled)  dual FIM $ MN^{-1} \Theta$. 
Even if it is not clear yet that the block-diagonal approximation behaves well,   the mean of eigenvalues of full matrix is exactly determined by the diagonal part as the following assertion.
\begin{cor} \label{cor:mean-full}
Denote by $m_{L,N}$ the wide limit  $M \to \infty$ of  the mean of eigenvalues of $\Theta/M$.
Then under the limit $L, N \to \infty$ with $L/N \to \alpha  < \infty$,   it holds that 
\begin{align}
 m_{L,N} \to \alpha q \frac{1-  \exp(-\eps_1 - \eps_2)}{\eps_1 + \eps_2}.
\end{align}
\end{cor}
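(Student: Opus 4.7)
The plan is to exploit the block representation of $\Theta$ displayed just before the corollary: even though the off-diagonal blocks contribute to the eigenvalues of $\Theta$ in a complicated way, they contribute nothing to its trace. Since the mean of the eigenvalues of a matrix is exactly its normalized trace, the computation collapses to the diagonal blocks $H_L(x(n))$, and from there \cref{prop:expectation} finishes the job.

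Concretely, I would first note that $\Theta$ is an $NM\times NM$ matrix, so the mean of the eigenvalues of $\Theta/M$ is
\begin{align}
m_{L,N}^{(M)} := \frac{1}{NM}\Tr\!\left(\frac{\Theta}{M}\right) = \frac{1}{NM^2}\Tr(\Theta).
\end{align}
Reading off the block form of $\Theta$ one gets $\Tr(\Theta) = (M/N)\sum_{n=1}^N \Tr(H_L(x(n)))$, hence
\begin{align}
m_{L,N}^{(M)} = \frac{1}{N^2 M}\sum_{n=1}^N \Tr(H_L(x(n))).
\end{align}

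Next I would take the wide limit $M\to\infty$ at fixed $L,N$. For each $n$, the spectral distribution of $H_L(x(n))$ converges weakly to $\mu_L$ and, by the same computation used to establish \cref{prop:expectation}, its first moment converges to $m_1(\mu_L)$. Since the outer sum runs over the finite index set $n=1,\dots,N$, one may exchange the limit with the sum and obtain
\begin{align}
m_{L,N} = \lim_{M\to\infty} m_{L,N}^{(M)} = \frac{1}{N^2}\cdot N\cdot m_1(\mu_L) = \frac{m_1(\mu_L)}{N}.
\end{align}

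Finally I would rewrite $m_{L,N} = (L/N)\cdot L^{-1}m_1(\mu_L)$ and let $L,N\to\infty$ with $L/N\to\alpha$. By \cref{prop:expectation}, $L^{-1}m_1(\mu_L) \to q(1-\exp(-\eps_1-\eps_2))/(\eps_1+\eps_2)$, while $L/N\to\alpha$ by assumption, yielding the claimed limit $\alpha q(1-\exp(-\eps_1-\eps_2))/(\eps_1+\eps_2)$. There is essentially no serious obstacle once the trace identity is in hand; the only subtle point is justifying the exchange of $M\to\infty$ with the finite sum over samples, which is harmless since $N$ is fixed during the inner limit and the convergence of $M^{-1}\Tr(H_L(x(n)))$ to $m_1(\mu_L)$ holds for each $n$ under the same asymptotic-freeness assumptions that underlie \cref{prop:expectation}.
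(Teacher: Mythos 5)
Your proposal is correct and follows essentially the same route as the paper's own proof: the mean of the eigenvalues is the normalized trace, which ignores the off-diagonal blocks and reduces to $\sum_{n}\mathrm{tr}(H_L(x(n)))/N^2 \to m_1(\mu_L)/N$, after which \cref{prop:expectation} and $L/N\to\alpha$ give the limit. The only difference is that you spell out the (harmless) exchange of the wide limit with the finite sum over samples, which the paper leaves implicit.
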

\begin{proof}
Fix $L, N$. The mean of eigenvalues of $\Theta/M$ is equal to $\mathrm{tr}(\Theta/M)$ and 
$\mathrm{tr}(\Theta/M) =  \sum_{n=1}^N \mathrm{tr}(H_L(x(n)))/N^2 \to m_1(\mu_L)/N$. By \cref{prop:expectation}, the assertion follows.
\end{proof}
\cref{cor:mean-full} implies that the mean eigenvalue of $\Theta/M$ is close to $qLN^{-1}$ when $L$ and $N$  are of the same magnitude.
In the next section, we empirically examine the block-diagonal approximation.

\section{Empirical Analysis}
\subsection{Expected vs Conditional FIM}\label{sssec:expected_vs_conditional}

\begin{figure}[htbp]
    \centering
    \includegraphics[width=0.49\linewidth]{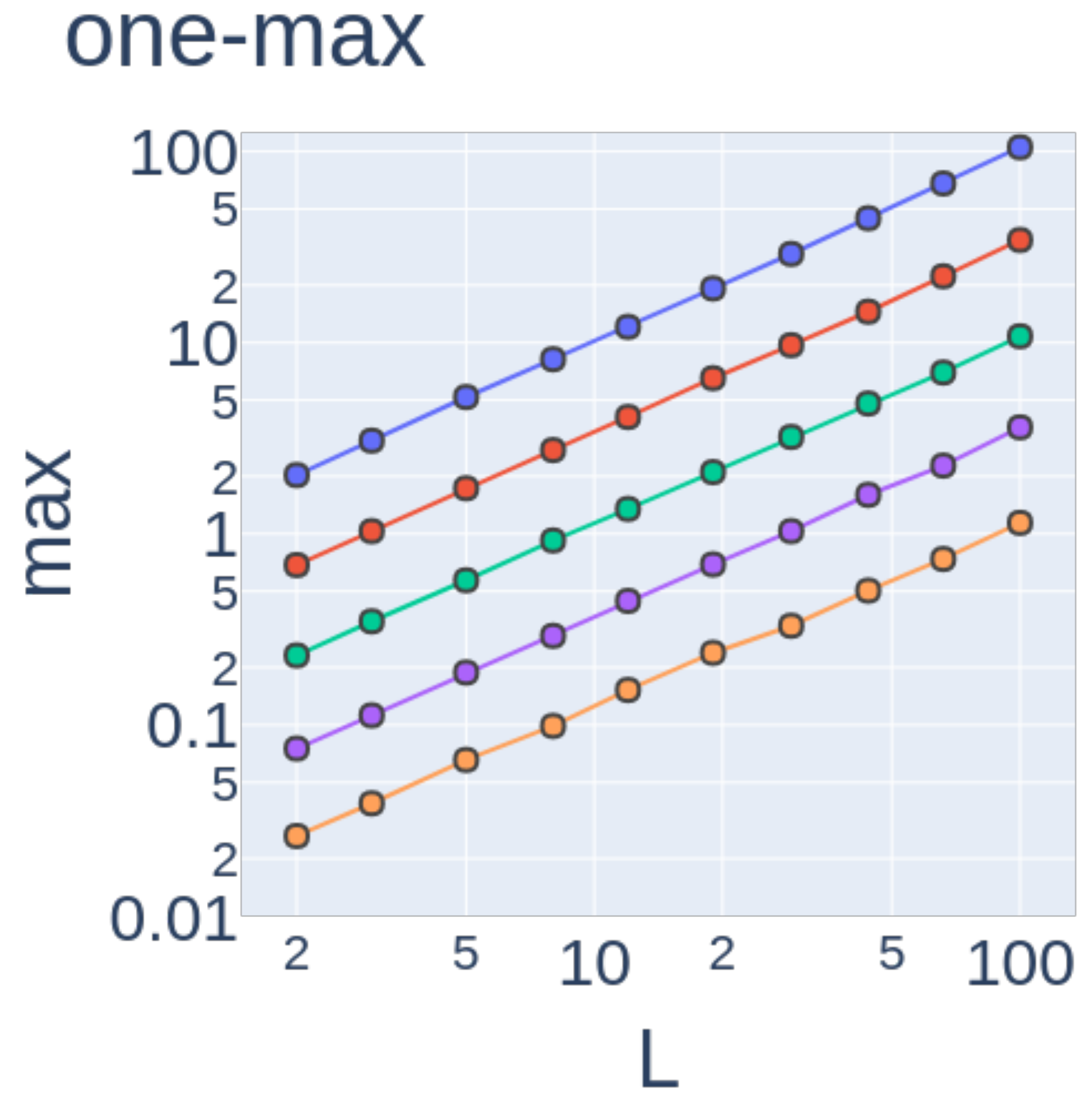}
    \includegraphics[width=0.49\linewidth]{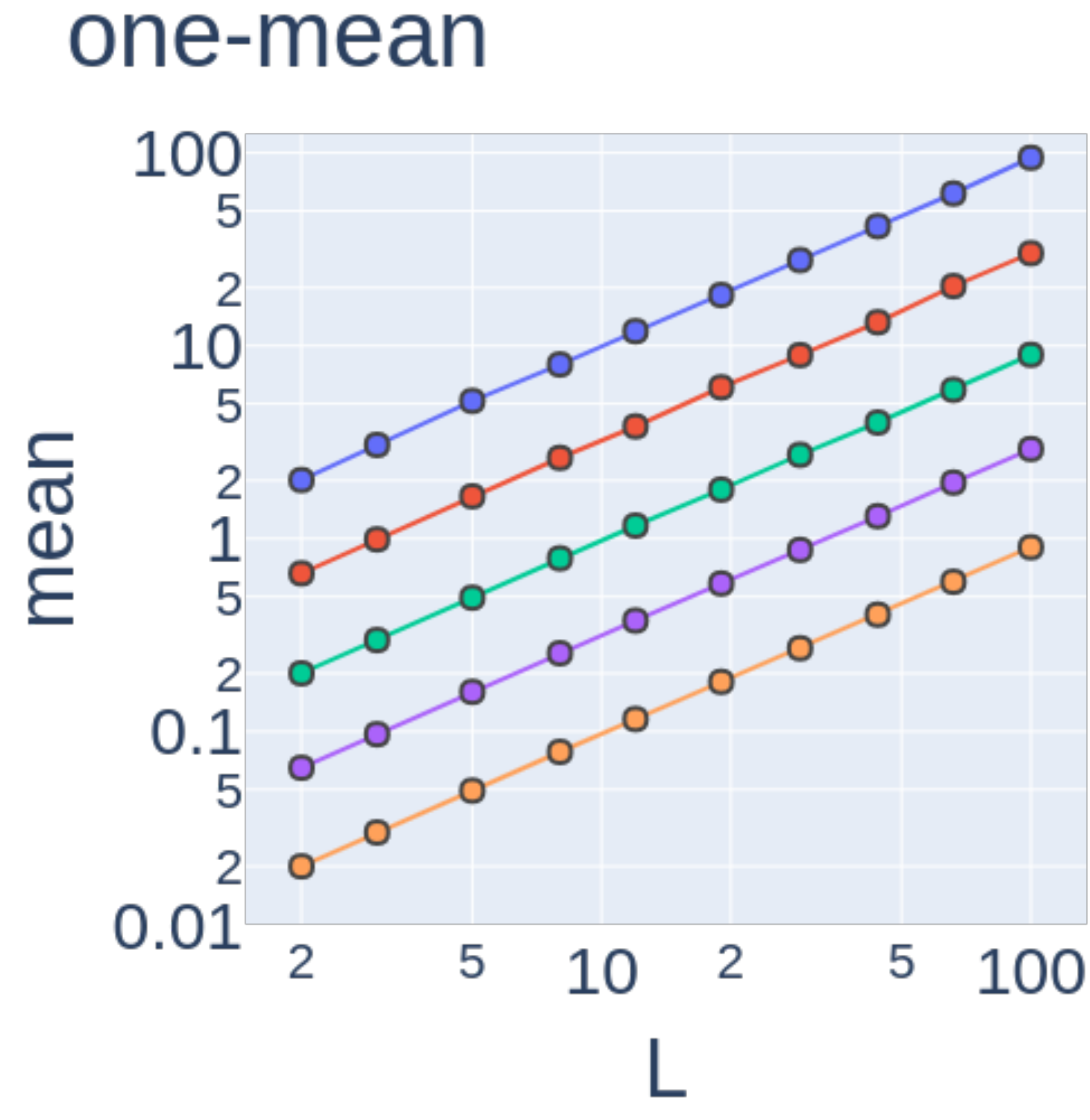}
    \includegraphics[width=0.49\linewidth]{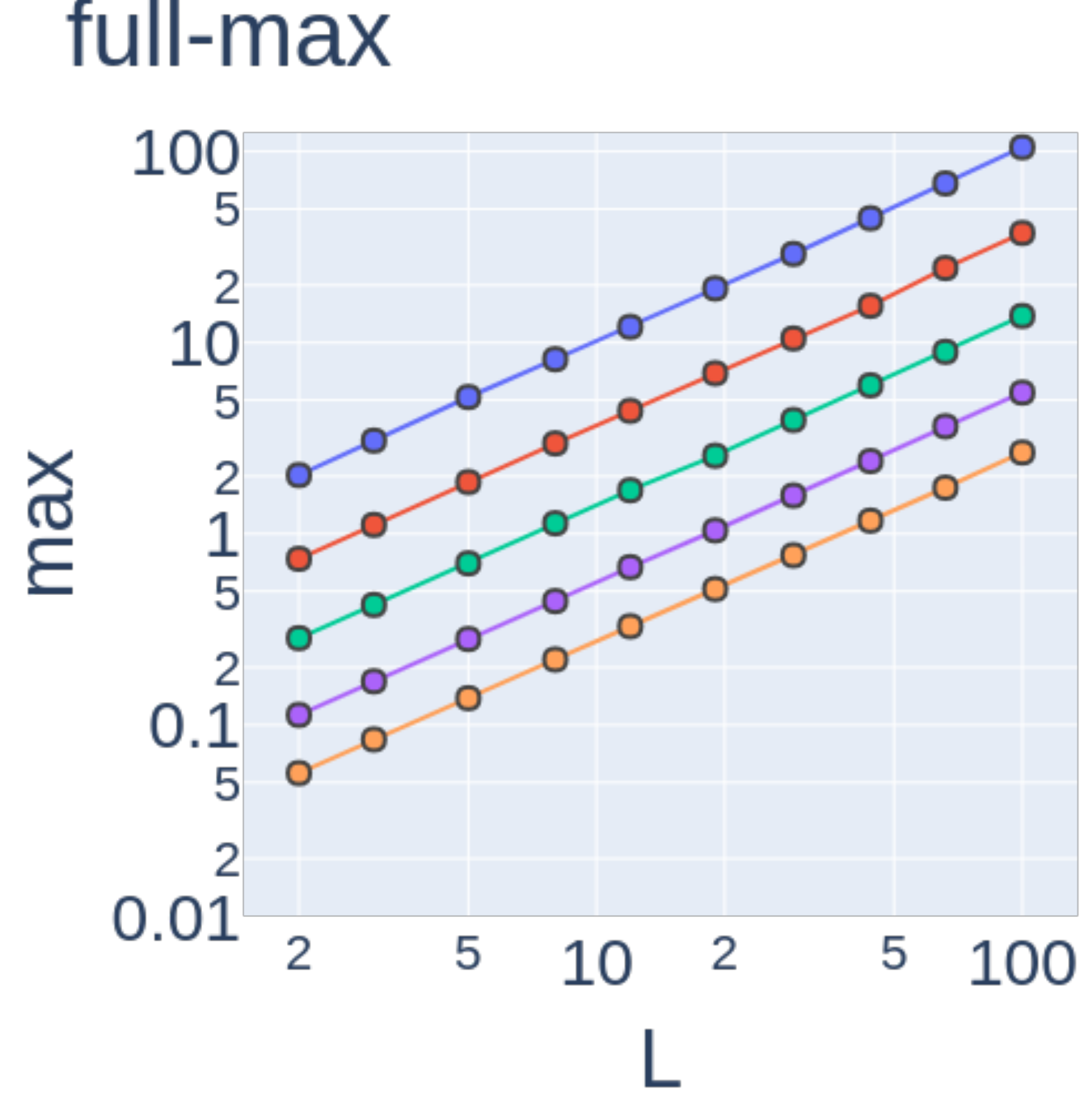}
    \includegraphics[width=0.49\linewidth]{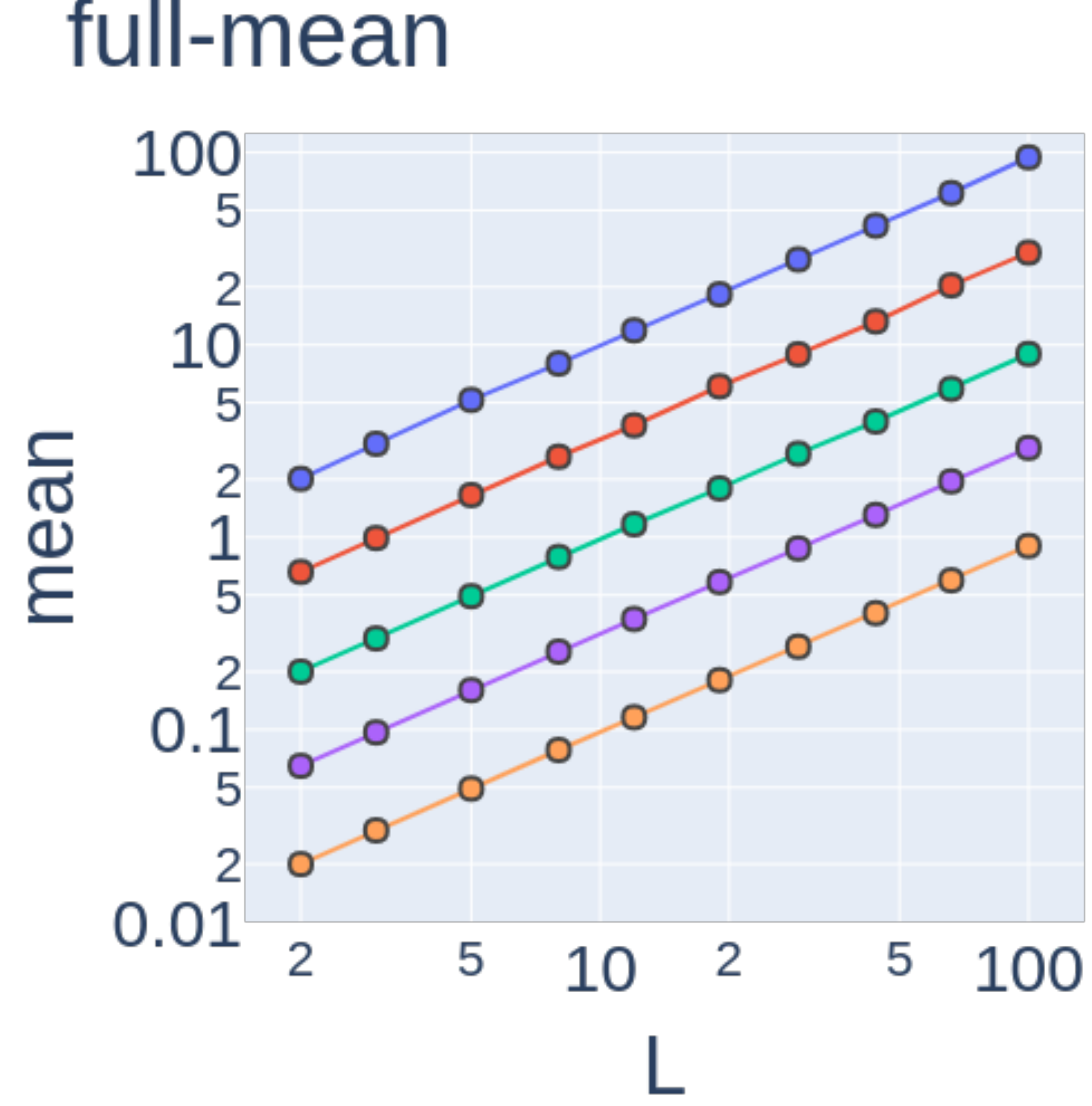}
    \caption{Eigenvalues of $\Theta/M$ (full matrix) and the collection of eigenvalues of $(H_L(x(n))/N)_{n=1}^N$ (one block); We show maximum  and mean in each case. We set $M=100$ and each axis logarithmic. }
    \label{fig:stats_of_eigs}
\end{figure}

\begin{figure}[ht]
    \centering
    \includegraphics[width=0.62\linewidth]{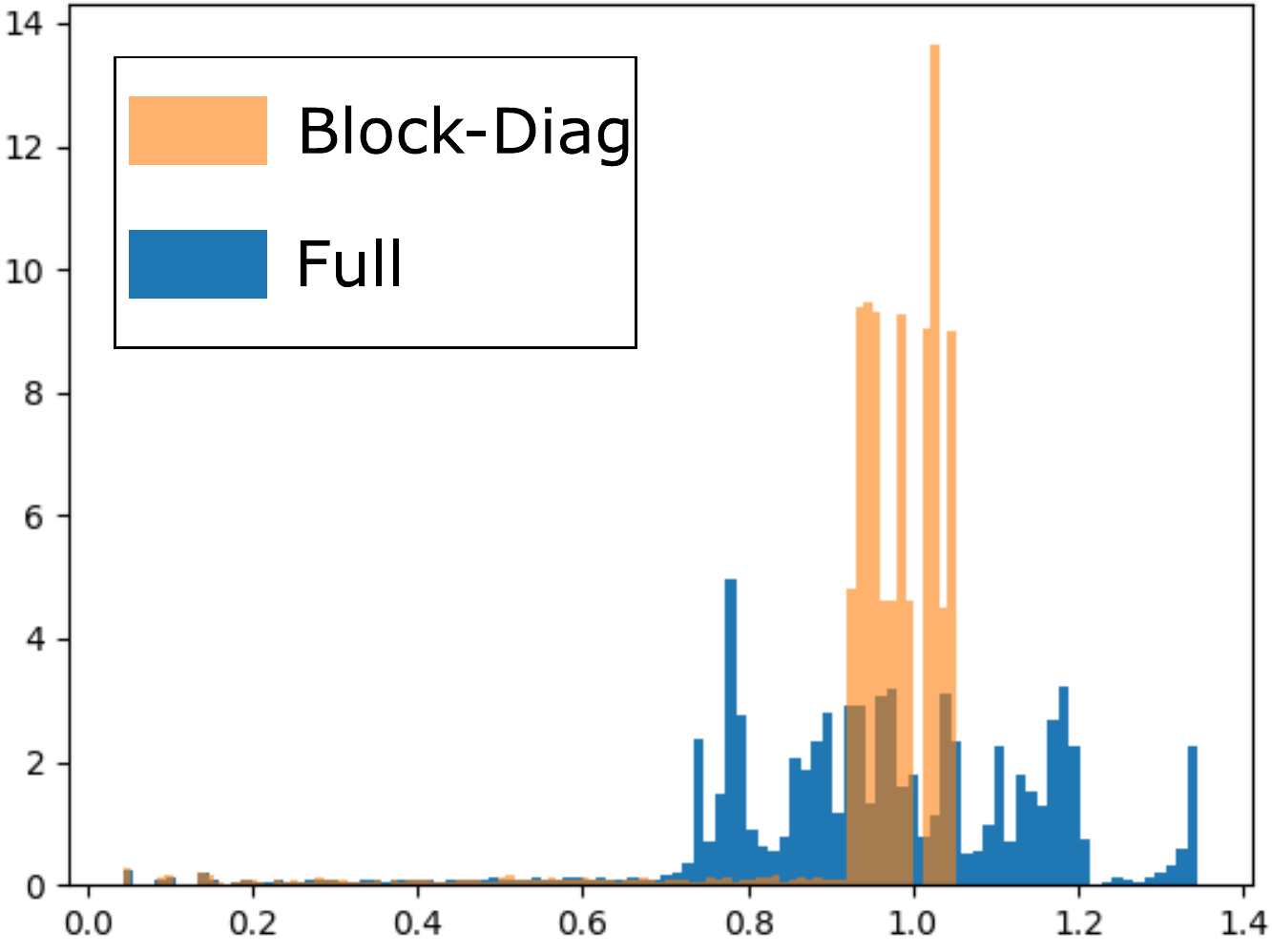}
    \caption{Histograms of eigenvalues. }
    \label{fig:full_vs_one}
\end{figure}

In order to investigate the difference between the usual FIM \eqref{align:fim} for multi-samples and the conditional FIM \eqref{align:cfim},  we numerically computed eigenvalues of their dual matrices with $\hat{q}_0 =1$.
\cref{fig:stats_of_eigs} shows the statistics of their eigenvalues of $\Theta/M$ (Full) and $(H_L(x(n))/N)_{n=1}^N$ (Block-Diag) with $L=N=10$.
We observed that the maximum and the mean eigenvalue of each matrix are $O(L/N)$, except for the maximum eigenvalue of $\Theta/M$  with a large $N$.

\cref{fig:full_vs_one} shows an example of  eigenvalues of $\Theta/M$ (Full) and $(H_L(x(n))/N)_{n=1}^N$ (Block-Diag) concentrated around $L/N$ for a  small $N$.  For the conditional FIM, we observed theoretical predictions \cref{prop:expectation} and  \cref{thm:maximum} agree well with experimental results.

\subsection{Training Dynamics}\label{ssec:dynamics}

\begin{figure*}[t]
   \centering
    \includegraphics[width=0.2452\linewidth]{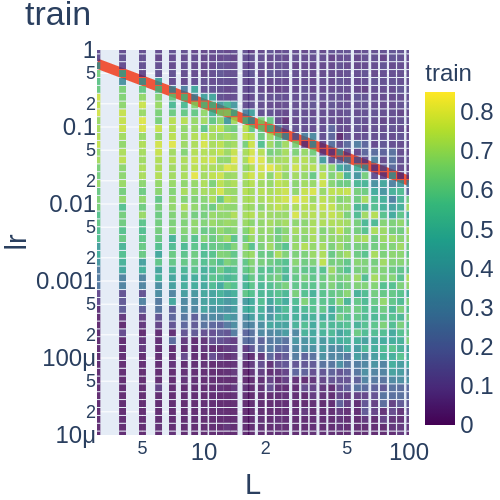}
    \includegraphics[width=0.2452\linewidth]{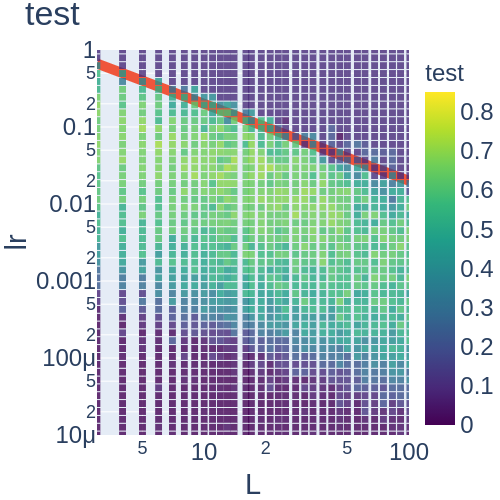}
    \includegraphics[width=0.2452\linewidth]{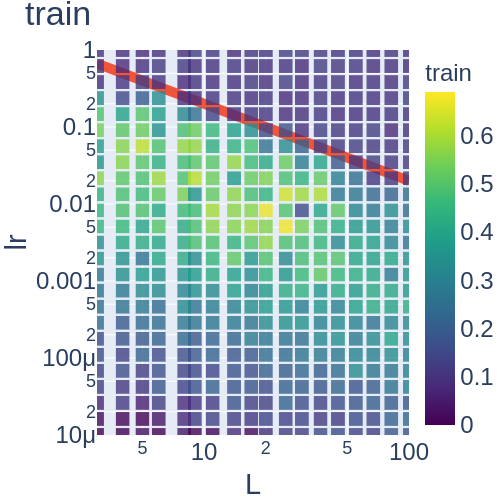}
    \includegraphics[width=0.2452\linewidth]{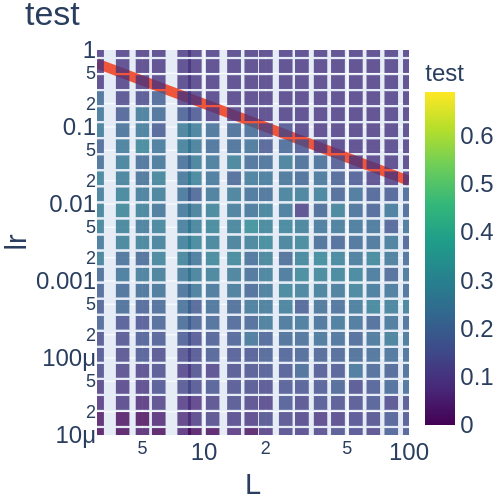}

    \caption{
Accuracy heatmaps for different value of $L$ (x-axis) and lr ($=\eta$) (y-axis) after online training. 
In each figure, the line is $\eta=2/L$. 
Each axis is logarithmic. 
Each network was trained on Fashion-MNIST or CIFAR10. 
In each experiment, the training dataset consists of 500 data points sampled uniformly from a whole data set, and the testing dataset consists of 10000 samples separated from the training dataset.
For the Fashion-MNIST (resp.\,CIFAR10), the network was trained for a single epoch (resp.\,ten epoch) training.
The leftmost figure (resp.\,the second from left) is the heatmap of the training (resp.\,testing) accuracy evaluated after training on Fashion-MNIST.
 The third figure from left (resp.\,the rightmost ) is the heatmap of the training (resp.\,testing) accuracy evaluated after training on CIFAR10.
    }
\label{fig:L_vs_lr_N1}
\end{figure*}

We investigate how our FIM's spectrum affects  training dynamics.
Consider the online gradient descent method: 
\begin{align}
\theta_{t+1} = \theta_t - \eta \nabla_\theta [M^{-1}\mathcal{L}(f_{\theta_t}(x(t) - y(t))].    
\end{align}
Under the first order Tayler approximation of $f_\theta$ around the initial parameter $\theta_0$,  we have the following approximation:
\begin{align}
&\theta_{t+1} \sim  (I - \eta  H_L(x(t), \theta_0) ) \theta_t \notag \\ 
&+ M^{-1} f_{\theta_0}(x)^\top (y- f_{\theta_0}(x) + \nabla_\theta f_{\theta_0}(x(t))^\top\theta_0 ).    
\end{align}
Hence at the initial phase of the training, the condition $||I - \eta  H_L||< 1$ is necessary to avoid the explosion of parameters. In particular, 
\begin{align}\label{align:boundary}
\eta  <  \frac{2}{\lambda_{\max}(H_L)}.
\end{align}
By  \cref{thm:maximum}, we expect that the boundary $2/\lambda_{\max}(H_L)$ will be close to $2/(qL)$.

To confirm the boundary, we exhaustively searched training and testing accuracy while changing $L$ and $\eta$ with normalizing inputs with $\hat{q}_0 = 1$.
In \cref{fig:L_vs_lr_N1},  we observed the theoretical prediction $\eta = 2/L$ by \eqref{align:boundary} coincided nicely with the boundary of the collapse of the accuracy obtained in experiments.

We normalized each input so that $\hat{q_0}=1$ and converted class labels to an orthonormal system in $\R^{M}$.
In whole experiments, we commonly use the hard-tanh activation with $s^2=0.125$ and $g=1.0013$  to archive dynamical isometry. 
After training, we computed the MSE loss $\mc{L}/M$ and accuracy on the dataset of $10000$ samples separated from the dataset for training.

We trained the network on benchmark datasets Fashion-MNIST  \citep{Xiao2017fashion} 
and CIFAR10 \citep{krizhevsky2009learning}. 
The Fashion-MNIST (resp.\,CIFAR10) consists of $28 \times 28$ (resp.\,$3\times32\times32$) dimensional images and $10$ class labels,
We applied the online gradient descent on $500$ data, which is uniformly sampled from whole data and fixed, for an epoch (resp.\, ten epochs) in the Fashion-MNIST (resp.\,CIFAR10).
In \cref{fig:L_vs_lr_N1},  the difference between training and testing accuracy on the CIFAR10 was larger than that on the Fashion-MNIST. 
The reason for this is because of the overfitting of DNNs to the small dataset consists of only 500 data. 
However, agreement with the theoretical line was also visible in the testing accuracy on the CIFAR10.

Additionally, we observed that the testing loss slightly violated the boundary at large $L$ (i.e., $\eta > 2/L$ and the test loss was not large). (See the supplemental material B for the detail.)
In this region, we found that the spectral distribution of $H_L$ during training was far different from that in the initial state. Thus, although the parameters did not explode, we had a qualitative change of the optimization in the seeping region. In this sense, the theoretical boundary explains well the state of training.

\section{Conclusion}
Our study establishes a springboard for a new way to examine Fisher information of neural networks with a powerful methodology provided by the free probability theory. 
In particular, we have shown that the dual conditional FIM's eigenvalues get concentrated at the maximum when DNNs achieve approximately dynamical isometry. 
Furthermore, we have explicitly solved a propagation equation and shown the spectrum distribution's exact form in the case depth is three. 

As the study's evidence indicates, the empirical Fisher information matrix is in  $L/N$ growth rate.
Interestingly, a depth-dependent learning rate has been empirically observed in  DNNs achieving dynamical isometry \citep{Pennington2017Resurrecting} and required for the convergence of training in deep linear networks \cite{hu2020provable}. Our theoretical results support their empirical observation and clarify the intrinsic distortion of the parameter space.

We are aware that our spectral analysis of the FIM may have a few limitations.
A limitation is that our analysis is based on the asymptotic freeness of Jacobi matrices (\cref{assmptn:freeness}).
We expect that the works \citep{Hanin2019products, Yang2019scaling, Pastur2020random}, which prove or treat the asymptotic freeness with Gaussian initialization, will help us to prove it with the orthogonal initialization.
Another limitation is that the batch size is limited to be small in our theory. 
Future analysis of block random matrices via free probability theory will investigate the full Fisher information matrix spectrum.

\section{Acknowledgement}
The authors gratefully acknowledge valuable and profound comments of Roland Speicher.
The authors would like to thank Hiroaki Yoshida and Noriyoshi Sakuma for constructive discussion.
TH acknowledges the funding support from  JST ACT-X Grant number JPMJAX190N. 
RK acknowledges the funding support from JST ACT-X Grant Number JPMJAX190A.

\bibliographystyle{unsrtnat}
\bibliography{references_direct.bib}


\onecolumn
\numberwithin{equation}{section}
\appendix
\section{Proof for The Two-Hidden-Layer Case}
Here we provide the postponed proof.
\begin{proof}
Set $\tilde{\nu}_\ell = ( \gamma_\ell^{-1}\,\cdot\,)_* \nu_\ell =  \alpha_\ell \delta_1 + (1-\alpha_\ell)\delta_0$.
Then we have 
\begin{align}
    \mu_3  = \affine{ \sigma^2_3\gamma_2 }{ q_2}  \left[ \tilde{\nu}_2 \boxtimes \affine{ \sigma_2^2 \gamma_1 q_0 }{q_1} \tilde{\nu}_1   \right].  
\end{align}

By replacing  $\sigma_{\ell+1}^2\gamma_\ell q_{\ell-1}q_{\ell}^{-1}$ with $\gamma_\ell$ ($\ell=1,2$), we may assume that  $q_\ell = \sigma_\ell = 1$.
Write
 $   \xi = \affine{\gamma_1}{1} \tilde{\nu}_1.$
Since $S_{\tilde{\nu}_2 \boxtimes \xi}(z) = S_{\tilde{\nu}_2 }(z) S_{\xi}(z)$, 
we have $h_\xi^{-1}(z) =   h_{\tilde{\nu}_2 \boxtimes \xi}^{-1}(z)S_{\tilde{\nu}_2}(z)$. 
Hence $h_{\tilde{\nu}_2\boxtimes\xi}(z)$ is the solution  of the following equation  on $w$:
 \begin{align}\label{align:equation}
 w = h_{\xi}( zS_{\tilde{\nu}_2 }(w) ) .
\end{align}
Note that
\begin{align}
 S_{\tilde{\nu}_2}(z) = \frac{z+1}{z+ \alpha_2},   \
 h_\xi(z)  =  z \left[\frac{1-\alpha_1}{z-1}+ \frac{\alpha_1}{z-1-\gamma_1} \right]-1 .
\end{align}
Thus the solution of \eqref{align:equation} is given by
\begin{align}
w= \frac{   g(z) \pm z\sqrt{-f(z)}  }{ 2(z-1)(1+\gamma_1-z)},
\end{align}
where
$f(z) =  ( \lambda_+ -z  )(z -\lambda_-),$
$\lambda_\pm   =    1 + \gamma_1 \left( \sqrt{\alpha_1(1 - \alpha_2)} \pm\sqrt{ \alpha_2(1 - \alpha_1)} \right)^2$,
and $g(z) = (z - 1)(z - 2(1+\gamma_1)\alpha_2) - \gamma_1(\alpha_1 -\alpha_2)z.$
By  $G(z) = (h(z) +1 )/z$ and by the condition $\Im G(z) < 0 $ if $\Im z > 0$, we have 
\begin{align}
    G_{\tilde{\nu}_2 \boxtimes \xi}(z) =  \frac{1}{z}\left[ 1 +  \frac{ g(z)   }{ 2(z-1)(1+\gamma_1-z)}\right]
    + \frac{ \sqrt{-f(z)}  }{ 2(z-1)(1+\gamma_1 - z)}.
\end{align}
Note that $1 \leq \lambda_- \leq \lambda_+ \leq 1 + \gamma_1$.
By the Stieltjes inversion, the absolutely continuous part of $\nu \boxtimes \mu$ is 
\begin{align}\label{align:absconti}
 -\frac{1}{\pi} \lim_{y \to +0}\Im G_{{\tilde{\nu}_2 \boxtimes \xi}}(x+ y\sqrt{-1})  =\frac{\sqrt{f(x)}}{2\pi(x-1)(1+\gamma_1-x)}{\bf 1}_{\{f \geq 0\}}(x)  \  (x \in \R).
\end{align}
Te weights of the atoms are given by
\begin{align}
\lim_{y \to +0} z G_{\tilde{\nu}_2 \boxtimes \xi}(y\sqrt{-1}) &= 1 - \alpha_2,\label{align:purepoint1} \\
\lim_{y \to +0}  (z-1) G_{\tilde{\nu}_2 \boxtimes \xi}(1+ y\sqrt{-1} ) &=  (\alpha_2- \alpha_1)^+,\label{align:purepoint2}\\
\lim_{y \to +0}(z -1-\gamma_1 )G_{\tilde{\nu}_2 \boxtimes \xi}(1 + \gamma_1 + y\sqrt{-1}) & =  ( \alpha_1 + \alpha_2 -1  )^+,\label{align:purepoint3}
\end{align}
where $a^+ = \max(a,0)$ for $a \in \R$.
By \cite{Belinschi2003atoms}, the free multiplicative convolution $\tilde{\nu}_2 \boxtimes \xi$ has no singular continuous part. Hence $\tilde{\nu}_2 \boxtimes \xi$ is the sum of the absolutely continuous part \eqref{align:absconti} and the pure point part (\ref{align:purepoint1}, \ref{align:purepoint2}, \ref{align:purepoint3}) as follows:
\begin{align}
 ( \tilde{\nu}_2 \boxtimes \xi) (dx)& =   (1- \alpha_2)\delta_0(dx) + (\alpha_2 - \alpha_1)^+ \delta_1(dx)  
  +  (\alpha_1 + \alpha_2 -1)^+\delta_{1+ \gamma_1}(dx)\\
 & +  \frac{\sqrt{(\lambda_+ -x)(x-\lambda_-)}}{2\pi(x-1)( 1+  \gamma_1-x)}{\bf 1}_{[\lambda_-, \lambda_+]}(x)(dx).
\end{align}
It holds that  $\mu_{3} = \affine{\gamma_2}{1}(\tilde{\nu}_2 \boxtimes  \xi)$.
We have completed the proof.
\end{proof}

\section{Out of  Initial Spectral Distribution}
\cref{fig:L_vs_lr_AL} shows the detailed results of experiments in Section 5.2.
As discussed in Section 5.2, the accuracy followed the boundary (upper figures).
However, in the boundary areas with $L/2< \eta < 0.2$ the loss reduced (lower figures), but the accuracy did not improve.
Hence the loss violates the boundary $\eta = L/2$ predicted by Theorem 4.3 and (35), but the accuracy does not.
In this section, we discuss the region $L/2< \eta < 0.2$ of the loss heatmaps.

Recall that we trained the network on benchmark datasets Fashion-MNIST \citep{Xiao2017fashion} 
and CIFAR10 \citep{krizhevsky2009learning}. 
The Fashion-MNIST (resp.\,CIFAR10) consists of $28 \times 28$ (resp.\,$3\times32\times32$) dimensional images and $10$ class labels.
We set  $M=28^2$ for the Fashion-MNIST  and $M=32^2$ for the CIFAR10, by shrinking the first layer in the case of the CIFAR10.
We applied the online gradient descent on $500$ data, which is uniformly sampled from whole data and fixed, for an epoch (resp.\, ten epochs) in the Fashion-MNIST (resp.\,CIFAR10).
Recall that we normalized each input so that $\hat{q_0}=1$ and converted class labels to an orthonormal system in $\R^{M}$, and use the hard-tanh activation with $s^2=0.125$ and $g=1.0013$  to archive dynamical isometry. 
After training, we computed the average of MSE loss  for each dataset $\mathcal{D}$,   which is given by the following:
\begin{align}
    \mathrm{Loss}(x,l) =   \frac{1}{2M |\mathcal{D}|} \sum_{(x,\ell) \in \mathcal{D}}{||f_\theta(x) - e_\ell||^2_2},
\end{align}
where $e_m \in \R^M$ ($m=1, 2, \dots, M$) is the unit vector whose $\ell$-th entry is one and the other entries are zero, and 
the dataset $\mathcal{D}$ is the training dataset, which consists of $500$-samples, or the testing dataset of $10000$ samples separated from the training dataset.
We also computed top-1 accuracy.
In \cref{fig:L_vs_lr_AL}, the difference between training and testing accuracy on the CIFAR10 was larger than that on the Fashion-MNIST. 
The reason for this is because of the overfitting of DNNs to the small dataset consists of only 500 data. 
However, agreement with the theoretical line was also visible in the testing accuracy on the CIFAR10.

\begin{figure*}[h]
   \centering
    \includegraphics[width=0.245\linewidth]{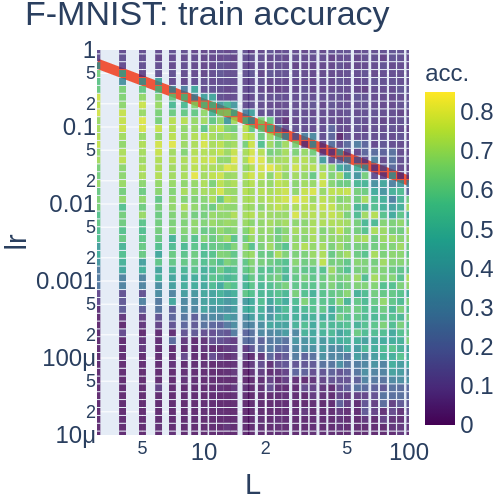}
    \includegraphics[width=0.245\linewidth]{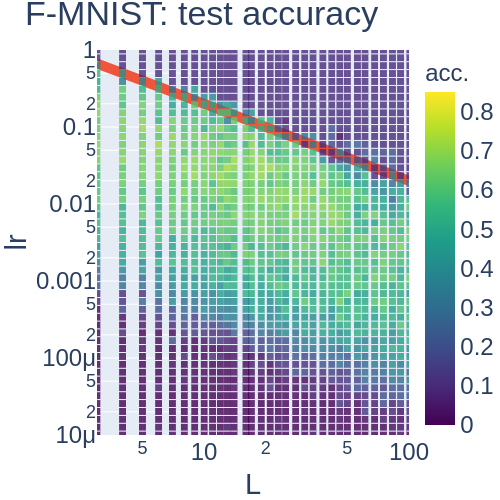}
    \includegraphics[width=0.245\linewidth]{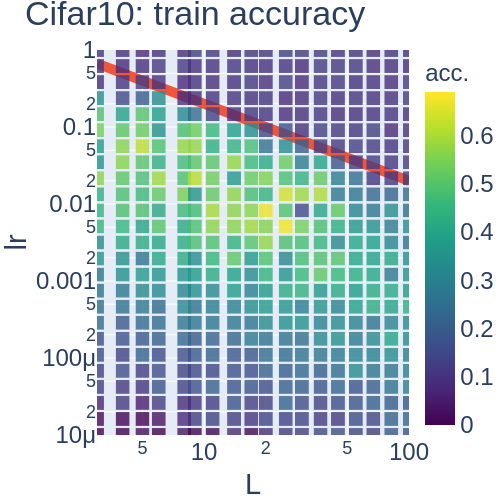}
    \includegraphics[width=0.245\linewidth]{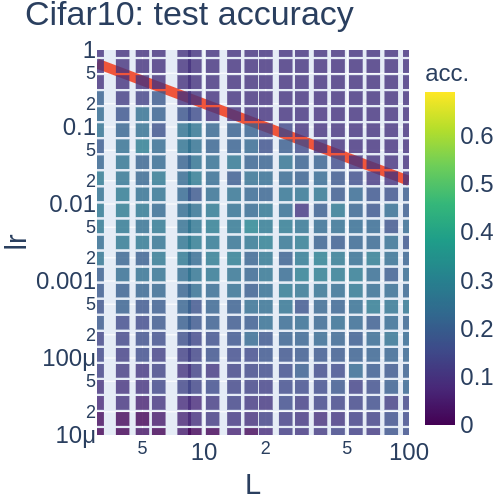}

    \includegraphics[width=0.245\linewidth]{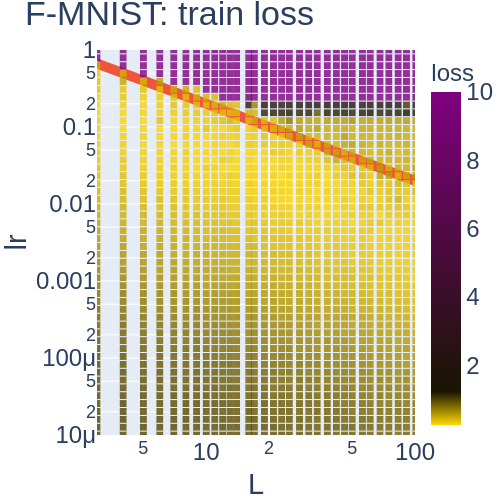}
    \includegraphics[width=0.245\linewidth]{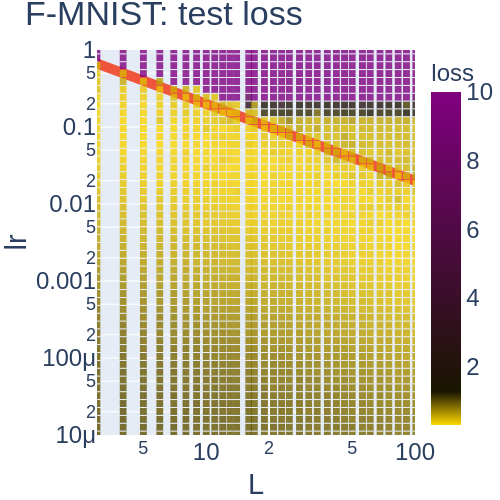}
    \includegraphics[width=0.245\linewidth]{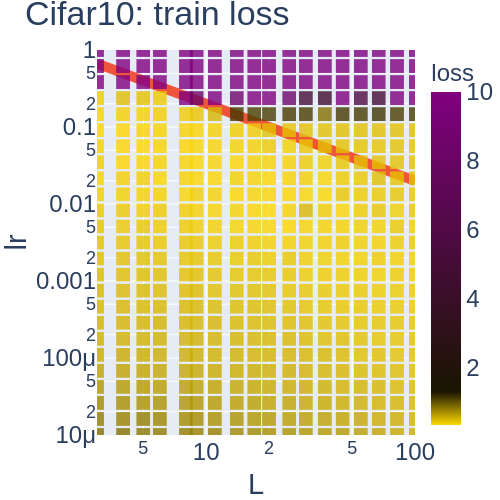}
    \includegraphics[width=0.245\linewidth]{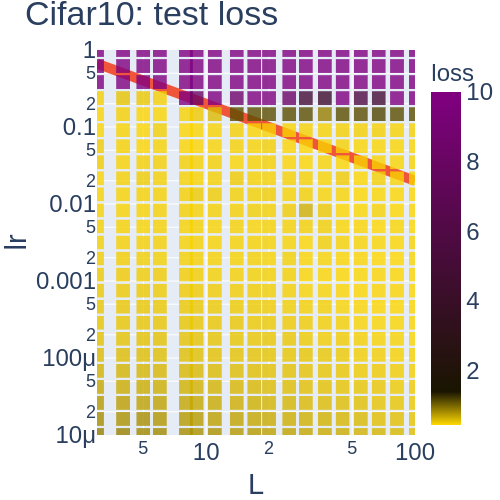}

    \caption{Accuracy heatmaps (upper figures) and loss heatmaps (lower ones) for different value of $L$ (x-axis) and lr ($=\eta$) (y-axis) after online training. 
    In each figure, the line is $\eta=2/L$. 
    Each axis is logarithmic. 
    Each network was trained on Fashion-MNIST or CIFAR10. 
    In each experiment, the training dataset consists of 500 data points sampled uniformly from a whole data set, and the testing dataset consists of 10000 samples separated from the training dataset.
    For the Fashion-MNIST (resp.\,CIFAR10), the network was trained for a single epoch (resp.\,ten epoch) training.
    In each figure of losses, we have rounded off any losses above 10 and show them as 10.
    }
    \label{fig:L_vs_lr_AL}
\end{figure*}

We focus on areas near the boundary  $\eta=2/L$ at large $L$ and we show how different the conditional FIM $H_L$'s spectral distribution during training was from that in the initial state.
\cref{fig:esd_dFIM} shows that eigenvalue distributions of $H_L$ near the boundary at large $L$.
We observed that most of the eigenvalue distributions shrunk in the areas with $\eta > 2/L$.
The shrinkage in the eigenvalue distribution is the reason for the reduction in test loss in the areas.

However, the reason why the appearance of the second boundary around $\eta =0.2$ appeared has not been revealed yet.

\begin{figure}[hb]
    \centering
    \includegraphics[width=0.785\linewidth]{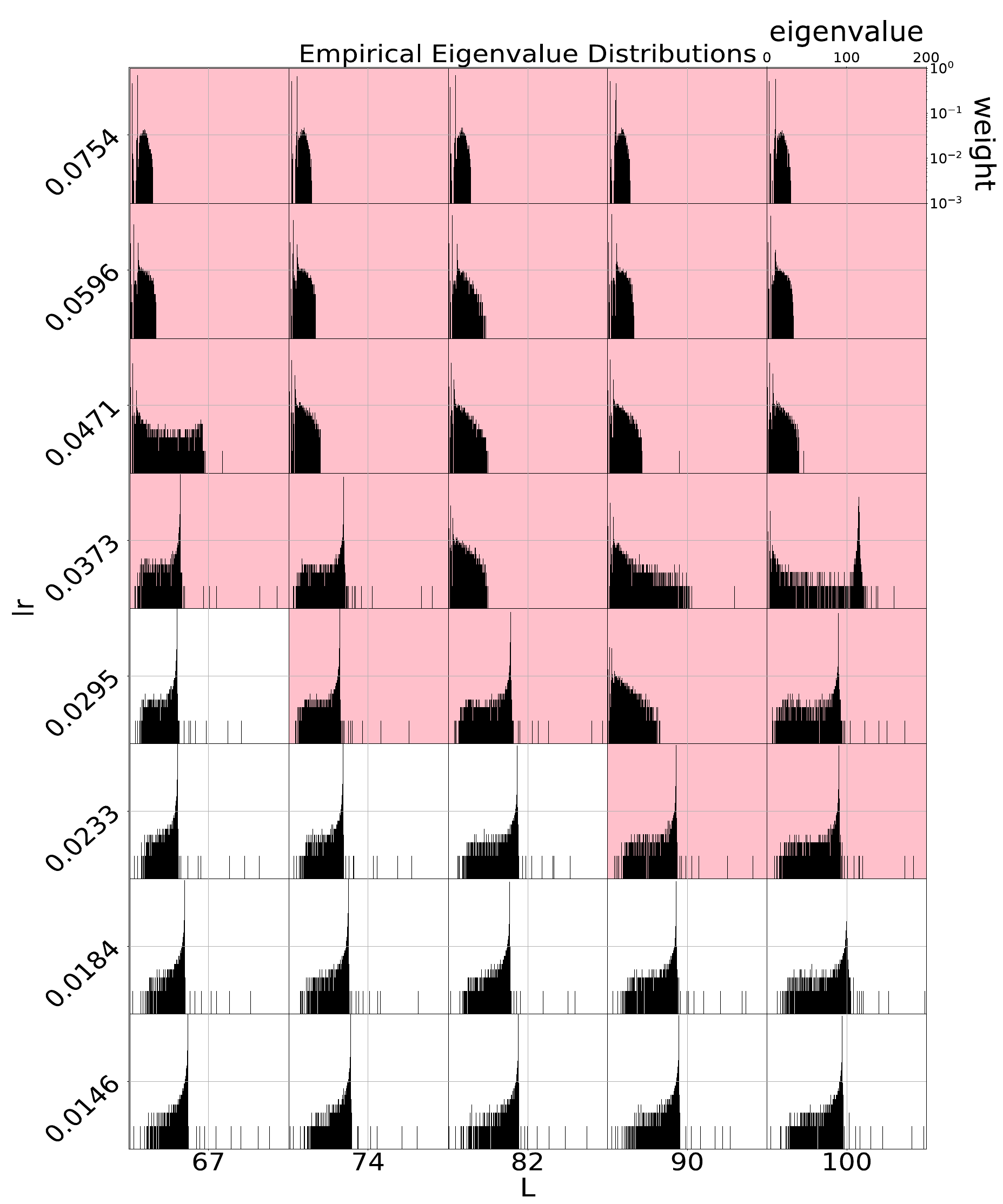}
    \caption{ Histograms of eigenvalue distributions of $H_L$ after training $500$-steps on Fashion-MNIST. All histograms share the x-axis (eigenvalue) and the y-axis (weight). The y-axis is logarithmic. The outer frame's x-axis represents the depth $L$ and its y-axis represents the learning rate $\eta$. Both axes are logarithmic. The histograms in the red region belong to $\eta > 2/L$, and the others belong to $\eta \leq 2/L$.}
    \label{fig:esd_dFIM}
\end{figure}

\section{A Short Introduction to Asymptotic Freeness for Machine Learning}
\newcommand{\C}{{\mathbb{C}}}

\subsection{Comparison with Classical Probability Theory}
Asymptotic freeness is the vital notion in free probability theory.  
In order to introduce asymptotic freeness to readers in the machine learning community, we explain the freeness by comparing free probability theory to classical probability theory. 
We refer readers to \citep{Mingo2017free} or \citep{voiculescu1992free} for the detail.

\begin{table}[h]
\centering
\begin{tabular}{|l|l|l|l|l|l}
\hline
            & r.v.        &  moments  & for multiple r.v.s  & independence  \\ \hline
classical   &  $X$             &  $\E[X^k]$ &  joint distribution    & decomposition of joint distribution     \\
free        &  $A$             &  $\tr[A^k]$ &  joint moments &   decomposition of joint moments \\
\hline
\end{tabular}
\caption{Comparison of free and classical probability theory.}
\label{table:free-classical}
\end{table}

Firstly, consider a matrix $A \in M_M(\C)$. We denote by $A^*$ the adjoint matrix, that is, complex-conjugate transpose matrix, of $A$.  Assume that $A$ is self-adjoint, that is, $A^*=A$.
Then the spectral distribution, denoted by $\mu_A$,  is given by 
\begin{align}  
    \mu_A = \frac{1}{N}\sum_{\lambda \in \sigma(A) } \delta_{\lambda},
\end{align}
where $\sigma(A) = \{ \lambda \in \C \mid  A - \lambda I \text{ \ is invertible }  \} \subset \R$, and $I$ is the identity operator.
We emphasize that the spectral distribution is determined by its moments. That is,  a distribution $\mu$ is equal to $\mu_A$ if and only if 
\begin{align}\label{align:moments-mat}
     \tr(A^k)  =   \int x^k \mu(dx)  \ (k \in \N),
\end{align}
where $\tr$ is the normalized trace so that $\tr(I) = 1$.
In other words, the family of moments  $\tr(A^k)$  $(k \in \N)$ has the same information as the spectral distribution $\mu_A$.
Now consider  a counterpart of the spectral distribution at classical probability theory.
Let $X$ be a real random variable and $\mu_X$ be the distribution of $X$.
If $\mu_X$ is compactly supported, the distribution $\mu_X$ is determined by the moments:
\begin{align}\label{align:moments-rv}
    \E[X^k] = \int x^k \mu_X(dx) \ (k \in \N).
\end{align}
By comparing \eqref{align:moments-mat} and \eqref{align:moments-rv}, we see that the self-adjoint operator $A$ corresponds to a real random variable and the spectral distribution $\mu_A$ corresponds to the distribution of the random variable.

Secondly, consider multiple matrices.
Let $A_1, A_2 \in M_M(\C)$ be non-commuting self-adjoint matrices.  As an example, consider the distribution of the sum of them. The spectral distribution of the sum $A_1 + A_2$ is determined by its moments
\begin{align}\label{align:moment-sum-mat}
\tr[ (A_1+A_2)^k ] =  \sum_{i_1, i_2 ,\dots, i_k \in \{1,2\} } \tr[ A_{i_1} A_{i_2} \cdots A_{i_k} ]   \ (k \in \N).
\end{align}
Hence the trace of all words on $A_1$ and $A_2$ determines the spectral distribution $\mu_{A_1 + A_2}$.
Now consider its counterpart at classical probability theory. Let $X_1$ and $X_2$ be real random variables.  Then 
\begin{align}\label{align:moment-sum-rv}
\E[ (X_1+X_2)^k ] =  \sum_{m=1}^k \begin{pmatrix} k \\ m\end{pmatrix} \tr[ X_1^m X_2^{k-m} ]   \ (k \in \N).
\end{align}
By comparing \eqref{align:moment-sum-mat} and \eqref{align:moment-sum-rv}, we need much more information to determine the distribution for non-commuting matrices  than for commuting real random variables. 
Therefore, we extend the definition of the joint distribution in a different way from that in the classical probability theory.
For a family of matrices $(A_j)_{j \in J}$, its \emph{joint moments} are trace of all words in the family, which are given by
\begin{align}
    \tr[ A_{j_1} A_{j_2} \cdots A_{j_k}]    \  (j_1, j_2, \dots, j_k \in J, \ k \in \N).
\end{align}
Note that we use the joint moments as a counterpart of the joint distribution, which is a probability distribution in classical probability theory and does not exist for non-commuting matrices.

Lastly, consider the notion of independence.
The independence in the classical probability theory means that the joint distribution of multiple random variables decomposed to the product of marginal distributions of each random variable.
Since we consider the joint moments for multiple operators,  we extend the concept of independence as a decomposition law of joint moments to each operator's moments.
The freeness is one of the decomposition laws of joint moments (see \cref{table:free-classical}).

\subsection{Freeness}
Summarizing above, we formulate the free algebra, the tracial state,  and introduce the freeness.

\begin{defn}
A \emph{unital $*$-algebra over $\C$} is a nonempty set  $\mc{A}$ equipped with a triplet $( +, \cdot, *)$ satisfying  the following conditions.
\begin{enumerate}
    \item  $(\mc{A}, +, *)$ is an associative but possibly noncommutative algebra over $\C$, where $+$ is the addition, $\cdot$ is the multiplication, $0 \in \C$ is equal to the additive identity, and $1 \in \C$ is equal to the multiplicative identity. 
    We omit the symbol $\cdot$ and simply write $x \cdot y$ as $xy$ for  $x,y \in \mc{A}$.
    \item  The map $* \colon \mc{A} \to \mc{A}$ satisfies the following:
    \begin{align}
        (xy)^* &= y^*x^*, \\
        (\alpha x + \beta y)^* &= \overline{\alpha} x^* + \overline{\beta}y^*,
    \end{align}
    for any $x,y \in \mc{A}$, $\alpha, \beta \in \C$, where $\overline{\alpha}$ is the complex conjugate of $\alpha$. 
\end{enumerate}
\end{defn}

\begin{example}
The matrix algebra $M_n(\C)$ of $n \times n$ matrices over $\C$ is a unital $*$-algebra for any $n \in \N$, where $A*$ is the adjoint matrix of  $A  \in M_n(\C)$, which is the $n \times n$ matrix obtained from $A$ by taking the transpose and the complex conjugate of the entries.
The matrix algebra $M_n(\R)$ of $n \times n$ matrices over $\C$ is a subalgebra of $M_n(\C)$ as an algebra over $\R$.  Now for $A \in M_n(\R)$,  $A^* = A^\top$.
\end{example}

\begin{example}
Let us denote by  $\C \langle Z_j \mid j \in J \rangle$   the free $\C$-algebra of indeterminates $(Z_j)_{j \in J}$, which is the algebra of polynomials of noncommutative variables given by the weighted sum of words as follows: 
\begin{align}
    \C \langle Z_j \mid j \in J \rangle  = \{ \alpha_\emptyset 1 +  &\sum_{k =1}^\infty \sum_{j_1, \dots, j_k \in J } \alpha_{j_1, j_2, \dots, j_k} Z_{j_1} Z_{j_2}\dots Z_{j_k},\\  &\text{\  where $\alpha_* \in \C$ is zero except for finite number of indices } \} .
\end{align}
We introduce an adjoint operation $*$ on $\C \langle Z_j \mid j \in J \rangle$ using universality of the free product (see \citep{voiculescu1992free}) by   
\begin{align}
    (\alpha_{j_1, j_2, \dots, j_k} Z_{j_1} Z_{j_2}\cdots Z_{j_k} )^* = \overline{\alpha_{j_1, j_2, \dots, j_k}}  Z_{j_k} \cdots Z_{j_2}Z_{j_1}.
\end{align}
\end{example}

Next, we introduce the tracial state, which is an abstracted notion of the normalized trace of matrices and the expectation operator to random variables.
\begin{defn}
A  \emph{tracial state} $\tau$ on a unital $*$-algebra $\mc{A}$ is a $\C$-valued map  satisfying the following conditions.
\begin{enumerate}
    \item  $\tau(1) = 1$.
    \item $\tau(\alpha a + \beta b) = \alpha \tau(a) + \beta \tau(b)$  $(a,b \in \mc{A}, \alpha,\beta \in \C)$.
    \item $\tau(a^*) = \overline{\tau(a)}$   $(a \in \mc{A})$.
    \item $\tau(a^*a) \geq 0$  $(a \in \mc{A})$
    \item $\tau(ab) = \tau(ba)$  $(a,b \in \mc{A})$.
\end{enumerate}
\end{defn}
The first condition is the normalization so that the total volume becomes $1$.
The last one is called the tracial condition.

\begin{defn}
A \emph{noncommutative probability space} (NCPS, for short) is a pair of a unital $*$-algebra $\mc{A}$ and a tracial state $\tau$ on $\mc{A}$.
\end{defn}

Here we have prepared to introduce the freeness. For readers' convenience, we introduce a simpler version of the freeness that is the minimum necessary to read the primary material.
\begin{defn}\label{defn:freeness} (Freeness)
Given two families $a = (a_j)_{j \in I}$  and $b = (b_j)_{j \in J}$ of elements in $\mc{A}$ are said to be $\emph{free}$ (or \emph{free independent} ) with respect to $\tau$ if the following decomposition of joint moments  follows:
For any $k \in \N$, any $p_1, p_2, \dots, p_k \in \C \langle X_i \mid i \in I \rangle $, and any $q_1, q_2, \dots, q_k \in \C \langle Y_j \mid j \in J \rangle$, it holds that
\begin{align}
    \tau[ p_1(a)q_1(b)p_2(a) q_2(b)  \cdots p_k(a) q_k(b)] = 0
\end{align}
if $\tau[p_m(a)]=\tau[q_m(b)] = 0$ ($m=1, 2, \dots, k$).
\end{defn}
See \citep{voiculescu1992free} for the full definition of the freeness for more general cases.

\begin{example}
Assume that  $a$ and $b$ are free elements in a NCPS ($\mc{A}, \tau$), that is, we consider the case $|I|=|J|=1$ in \cref{defn:freeness}. Write $x^\circ = x - \tau(x)$ for $x \in \mc{A}$.
Then 
\begin{align}
\mathrm{Cov}(a,b):= \tau(ab) - \tau(a)\tau(b) = \tau(a^\circ b^\circ)  + \tau(a^\circ)\tau(b) + \tau(a) \tau(b^\circ) = 0 .
\end{align}
Here we use the freeness to eliminate the term $\tau(a^\circ b^\circ)$.
From this equation, we see that free variables are uncorrelated.
The difference between freeness and classical independence appears in the decomposition of higher moments such as $\tau(abab)$:
\begin{align}
    \tau(abab) = \V[a]\E[b]^2 + \E[a]^2\V[b] + \E[a]^2\E[b]^2,
\end{align}
where $\E[a]=\tau(a)$ and $\V[a]=\tau(a^2) - \tau(a)^2$. The decomposition rule is different from that in the classical probability given by  $\E[XYXY] = \E[X^2Y^2] = \E[X^2]\E[Y^2]$ for independent random variables $X$ and $Y$.
\end{example}

\subsection{Infinite Dimensional Approximation of Random Matrices}

Here we introduce the relation between freeness and random matrices.

\begin{defn}
Let $ A_i(M), B_j(M) \in M_M(\C)$ ($M \in \N, i \in I, j \in J$).
Then the families $(A_i)_{i \in I}$ and $(B_j)_{j \in J}$ are said to be \emph{asymptotically free} as $M \to \infty$ if there exists $\mc{A} ,\tau$, and $a_i \in \mc{A} \ (i \in I)$ and $b_j \in \mc{A} \ (j \in J)$  so that 
\begin{align}
    &\lim_{M \to \infty}\tr(A_i(M)^k) = \tau(a_i^k)  \ (k \in \N, i \in I),\\
    &\lim_{M \to \infty}\tr(B_j(M)^k) = \tau(b_j^k)  \ (k \in \N, j \in J),\\
    &\text{and, $(a_i)_{ i \in I}$ and $(b_j)_{j \in J}$ are free.} 
\end{align}
\end{defn}

Here we introduce a known result in free probability theory.
\begin{prop}[{\cite[Prop.\,3.5]{Hiai2000asymptotic}} ]\label{prop:asymptotic-free}
For each $M \in \N$, consider the following matrices. 
Let $U(M)$ be random matrix uniformly distributed on  $M \times M$ unitary matrices (resp.\,orthogonal matrices).
Let $A(M)$ and $B(M)$ be complex (resp.\,real) self-adjoint random matrices independent of $U(M)$.
Assume that there exist two compactly supported distributions $\mu$ and $\nu$ such that the following limits hold almost surely.
\begin{align}
    \lim_{M\to\infty}\tr[A(M)^k]&=  \int x^k \mu(dx) \  (k \in \N), \\
    \lim_{M\to\infty}\tr[B(M)^k]&=  \int x^k \nu(dx) \  ( k \in \N).
\end{align}
Under the above conditions, it holds that $B(M)$ and $U(M)^* A(M) U(M)$ are asymptotically free as $M \to \infty$ almost surely. 
Furthermore, when $A(M)$ and $B(M)$ are positive definite, then the limit distribution of $B(M)^{1/2}U(M)^*A(M)U(M)B(M)^{1/2}$ is the multiplicative free convolution $\mu \boxtimes \nu$.
\end{prop}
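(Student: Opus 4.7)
The plan is to reduce the asymptotic freeness assertion to a Weingarten-calculus computation of alternating mixed moments, and then to upgrade convergence in expectation to almost-sure convergence by a variance bound. By \cref{defn:freeness} and the almost-sure convergence of the marginal moments of $A(M)$ and $B(M)$, it suffices to show that for every $k\in\N$ and every polynomials $p_1,\dots,p_k,q_1,\dots,q_k$ with $\lim_{M\to\infty}\tr(p_i(B(M)))=0$ and $\lim_{M\to\infty}\tr(q_i(A(M)))=0$, the trace
\[
\tr\bigl[p_1(B(M))\,U(M)^*q_1(A(M))U(M)\cdots p_k(B(M))\,U(M)^*q_k(A(M))U(M)\bigr]
\]
converges to $0$ almost surely as $M\to\infty$. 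Since $p_i(B(M))$ and $q_i(A(M))$ satisfy the same hypotheses as $A(M),B(M)$, it is convenient to set $\tilde A_i=q_i(A(M))$, $\tilde B_i=p_i(B(M))$ and work with these centred matrices throughout.

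The first main step is to compute the expectation over Haar-distributed $U(M)$ using the Weingarten formula. Expanding each factor of $U$ and $U^*$ into matrix entries and averaging, the expected trace becomes a sum over pairs of permutations $\sigma,\tau\in S_k$ of terms schematically of the form
\[
\mathrm{Wg}(M,\sigma\tau^{-1})\prod_{c\in\mathrm{cyc}(\sigma\gamma)}\tr\!\Bigl(\prod_{i\in c}\tilde A_i\Bigr)\prod_{c'\in\mathrm{cyc}(\tau)}\tr\!\Bigl(\prod_{i\in c'}\tilde B_i\Bigr),
\]
where $\gamma$ is the cyclic permutation coming from the outer trace. The standard asymptotics $\mathrm{Wg}(M,\pi)=M^{-k-|\pi|}(\mathrm{Moeb}(\pi)+O(M^{-2}))$ single out the non-crossing pair-like configurations, and a bookkeeping argument shows that the surviving leading-order terms reproduce exactly Voiculescu's moment--cumulant formula for the free product of $(\mu)$ and $(\nu)$. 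Because every $\tilde A_i$ and $\tilde B_i$ is asymptotically centred, each such leading configuration contains at least one singleton cycle whose normalised trace vanishes in the limit, so the whole expectation tends to $0$. The orthogonal case is analogous, with the Weingarten function replaced by its Brauer-algebra analogue; the only essential new ingredient is the standard fact that Brauer diagrams which are not pair partitions give strictly subleading contributions.

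To promote convergence in expectation to almost-sure convergence, I would bound the variance of each trace functional by $O(M^{-2})$ via a second-moment Weingarten expansion: the variance is indexed by two independent pairs of permutations, and all configurations that do not factor into the square of the first-moment expansion are subleading in $M$. A Borel--Cantelli argument along $M\in\N$, applied to a countable dense family of alternating words, then yields the claimed almost-sure asymptotic freeness. The multiplicative-convolution assertion follows at once: when $A(M),B(M)\ge 0$, the non-zero spectrum of $B(M)^{1/2}U(M)^*A(M)U(M)B(M)^{1/2}$ coincides with that of $U(M)^*A(M)U(M)B(M)$ by cyclicity of the trace, and the limiting joint moments of $U(M)^*A(M)U(M)$ and $B(M)$ are by construction those of a free pair with marginals $\mu$ and $\nu$, so the limit spectral distribution is $\mu\boxtimes\nu$ by the definition of the free multiplicative convolution recalled around \eqref{align:S-trans-decompose}. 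The main obstacle in the program is the orthogonal-case Weingarten estimate: unlike the unitary situation, non-pair Brauer diagrams must be explicitly controlled, but the required estimates are classical and amount to invoking, rather than rebuilding, known asymptotics for the orthogonal Weingarten function.
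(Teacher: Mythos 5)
You should first note that the paper itself does not prove \cref{prop:asymptotic-free}: it is imported verbatim from Hiai and Petz \cite[Prop.~3.5]{Hiai2000asymptotic}, so there is no internal argument to compare against, and any proof attempt is really a reconstruction of a literature result. Your sketch follows the standard modern (Weingarten-calculus) route: reduce asymptotic freeness to the vanishing of alternating words in asymptotically centred polynomials, average over the Haar matrix with the Weingarten formula, observe that the leading non-crossing configurations always leave a singleton cycle whose normalised trace vanishes by centredness, and then upgrade convergence in expectation to almost-sure convergence via an $O(M^{-2})$ variance bound and Borel--Cantelli, conditioning on $(A(M),B(M))$ --- which is exactly where their independence from $U(M)$ enters and deserves an explicit Fubini remark. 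That skeleton is sound, and the final step is correct once you add that $\tr[(B^{1/2}U^*AUB^{1/2})^k]=\tr[(U^*AUB)^k]$ and that compactly supported limit laws make the moment problem determinate, so the limiting joint freeness identifies the limit spectral distribution as $\mu\boxtimes\nu$. The caveat is that the three steps carrying essentially all the weight --- the asymptotics of the orthogonal (Brauer) Weingarten function, the combinatorial bookkeeping showing every surviving configuration of an alternating centred word contains a singleton, and the covariance estimate --- are asserted rather than carried out, and in the orthogonal case the Weingarten asymptotics you invoke were established after Hiai--Petz (by Collins and \'Sniady), whose original argument proceeds through different estimates. So your proposal is a correct outline taking a genuinely different (more modern) route than the source the paper cites, but in effect it defers the same hard content to the literature that the paper defers by citation; the citation buys brevity and historical accuracy, while your route makes explicit the mechanism that the paper later exploits, namely why the convolution identity \eqref{align:S-trans-decompose} and the recursion \eqref{align:reccurence_spec} become available for the FIM analysis.
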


Note that random matrices $A(M)$ and $B(M)$ do not have to be independent in \cref{prop:asymptotic-free}.

\subsection{Application to the FIM}
Recall that the propagation of the conditional FIM is given by the following equation:
\begin{align}
\Hmltn_{\ell +1 } =   \hat{q}_{\ell}I + W_{\ell+1} D_{\ell}\Hmltn_{\ell}D_{\ell}W_{\ell+1}^\top.
\end{align}
Let $A_\ell := W_\ell^* H_\ell W_\ell  = \hat{q}_{\ell-1}I + D_{\ell-1}\Hmltn_{\ell-1}D_{\ell-1}$. Note that $W_\ell^*=W_\ell^\top$. Then, firstly, $W_{\ell}$ and $A_\ell$ are independent.
Secondly, recall that $D_\ell$ and $(W_\ell, W_\ell^*)$ are assumed to be asymptotic free (see Assumption~3.1). Thirdly, each $W_\ell$ is uniformly distributed on orthogonal matrices. 
By the above conditions,  it holds that $W_\ell A_\ell W_\ell^*$ and $D_\ell^2$ are asymptotic free as the limit $M \to \infty$ by \cref{prop:asymptotic-free}.
Then the limit spectral distribution of $D_\ell H_\ell D_\ell = D_\ell W_\ell A_\ell W_\ell^* D_\ell$  is equal to $\mu_\ell \boxtimes  \nu_\ell$, where $\mu_\ell$ (resp\,$\nu_\ell$) is the limit spectral distribution of $H_\ell$ (resp.\,$D_\ell^2$).
Then we get the following desired recursive equation:
\begin{align}
    \mu_{\ell+1} = \affine{\sigma_{\ell+1}^2}{q_\ell}(\nu_\ell \boxtimes \mu_\ell).
\end{align}
\section{Analysis}
Firstly, let us review on the following proposition known in free probability theory.
\begin{prop}[\cite{Belinschi2003atoms}]\label{prop:atom}
Let  $\mu$ and $\nu$ be compactly supported probability distributions on $\R$.  Then $\mu \boxtimes \nu$ has an atom at $c \in \R$ if and only if the following three conditions hold : (i) $a \in \R$ (resp.\,$b \in \R$) is an atom of $\mu$ (resp.\,$\nu$), (ii) $c = ab$, and (iii)  $\mu(\{a\}) +  \nu(\{b\}) -1 > 0$.
Furthermore, if $c$ is an atom then $\mu \boxtimes \nu(\{c\}) = \mu(\{a\}) +  \nu(\{b\}) -1$.
\end{prop}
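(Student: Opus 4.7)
The plan is to prove the atom criterion via the subordination theory for the free multiplicative convolution, following Belinschi's strategy. Normalize by assuming $\mu, \nu$ are supported on $[0,\infty)$ (the general signed case reduces to this by tracking signs of atoms and passing to $|\mu|, |\nu|$). The basic dictionary is that atoms of a compactly supported probability measure $\rho$ on $[0,\infty)$ correspond to poles of the moment-generating transform $\psi_\rho(z) = \int zt/(1-zt)\,d\rho(t)$: namely, $\rho$ has an atom at $c > 0$ iff $\psi_\rho$ has a simple pole at $z_0 = 1/c$, and the weight is recovered by a residue formula, $\rho(\{c\}) = -\lim_{z \to z_0} (z-z_0)\,\psi_\rho(z)/z_0$ (or equivalently $\lim_{z \to z_0}(1 - cz)(1 + \psi_\rho(z))$). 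The case of an atom at $0$ is handled separately and reduces to elementary considerations since $\psi_\rho(0)=0$ with multiplicity given by $1 - \rho(\{0\})$.

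The main tool is the Voiculescu--Biane--Bercovici subordination theorem for $\boxtimes$: there exist analytic self-maps $\omega_1, \omega_2$ of $\mathbb{C} \setminus [0,\infty)$ such that
\begin{align*}
\psi_{\mu \boxtimes \nu}(z) = \psi_\mu(\omega_1(z)) = \psi_\nu(\omega_2(z))
\end{align*}
and a multiplicative coupling relating $\omega_1(z)\omega_2(z)$ to $z$ and $\psi_{\mu\boxtimes\nu}(z)$, which in particular forces $\omega_1(z)\omega_2(z)/z \to 1$ as $z \to 0$. These functions extend continuously to the real boundary (allowing values in $[0,\infty]$).

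For the necessary direction, suppose $c > 0$ is an atom of $\mu \boxtimes \nu$, so $\psi_{\mu \boxtimes \nu}$ blows up at $z_0 = 1/c$. Via $\psi_\mu \circ \omega_1$, this forces $\omega_1(z_0)$ to be a pole of $\psi_\mu$, hence $\omega_1(z_0) = 1/a$ for some atom $a$ of $\mu$; symmetrically $\omega_2(z_0) = 1/b$ for some atom $b$ of $\nu$. The multiplicative coupling, evaluated at $z_0$ and using the product structure, yields $ab = c$. This gives conditions (i) and (ii). For the weight, one compares the leading-order Laurent expansions of the two equal quantities $\psi_\mu(\omega_1(z))$ and $\psi_\nu(\omega_2(z))$ near $z_0$, using the first-order behavior of $\omega_1$ and $\omega_2$ at $z_0$ (their derivatives there are nonzero and finite by a Julia--Carath\'eodory-type analysis). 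Matching residues produces an algebraic relation whose unique consistent solution is
\begin{align*}
(\mu\boxtimes\nu)(\{c\}) = \mu(\{a\}) + \nu(\{b\}) - 1,
\end{align*}
and positivity of this number (required for a genuine atom) gives condition (iii).

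For the sufficient direction, given atoms $a,b$ satisfying (iii), one runs the above calculation in reverse: use the explicit form of $\psi_\mu, \psi_\nu$ near their poles together with the subordination/coupling equations to show that the system admits a solution with $\omega_1(1/ab) = 1/a$, $\omega_2(1/ab) = 1/b$, and that this forces $\psi_{\mu \boxtimes \nu}$ to have a pole at $z_0 = 1/(ab)$ with residue matching the claimed weight. The main obstacle is Step where one establishes boundary regularity of $\omega_1, \omega_2$ at the real point $z_0$: subordination theory gives analyticity only on $\mathbb{C} \setminus [0,\infty)$, and the precise first-order asymptotics at boundary atoms (including ruling out degenerate behavior in which $\omega_i$ is locally constant or produces the pole through a higher-order mechanism) is the technical core of Belinschi's paper and is what ultimately pins down the universal $-1$ in the weight formula.
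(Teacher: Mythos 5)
The paper does not prove this proposition at all: it is quoted as a known result of Belinschi (2003) and used as a black box in the appendix (in the proof of the lemma on the recursion for $||\mu_\ell||_\infty$). So there is no in-paper argument to compare against; what you have written is an attempt to reprove Belinschi's theorem, and as a standalone proof it has genuine gaps. First, the dictionary you start from is not correct as stated: an atom of $\rho$ at $c>0$ does \emph{not} in general give a pole of $\psi_\rho$ at $1/c$, because $\psi_\rho$ need not extend meromorphically (or at all) across the support; the correct statement is about non-tangential boundary limits, $\rho(\{c\})$ being recovered from a Julia--Carath\'eodory-type limit of $(1-cz)$ times the transform. This is not a cosmetic point: the entire ``residue matching'' step presupposes that $\omega_1,\omega_2$ have finite, nonzero angular derivatives at the real boundary point $z_0=1/(ab)$ and that these derivatives combine in exactly the way that produces the weight $\mu(\{a\})+\nu(\{b\})-1$. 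You acknowledge this yourself by calling it ``the technical core of Belinschi's paper''; but that concession means the proposal is an outline of where the proof lives rather than a proof --- the necessity direction and the exact weight formula are precisely the content you defer.

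Second, the sufficiency direction is not obtained by ``running the calculation in reverse''; trying to solve the subordination system backwards at a boundary point is at least as delicate as the forward direction. The standard and genuinely easier route for that half is operator-algebraic: realize $\mu,\nu$ as distributions of free positive elements $x,y$ in a tracial $W^*$-probability space, let $p,q$ be the spectral projections of $x$ at $a$ and of $y$ at $b$, use the general inequality $\tau(p\wedge q)\ge\tau(p)+\tau(q)-1$ valid in any finite von Neumann algebra, and observe that $x^{1/2}yx^{1/2}$ acts as multiplication by $ab$ on the range of $p\wedge q$, which immediately gives an atom at $ab$ of mass at least $\mu(\{a\})+\nu(\{b\})-1$. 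The subordination/boundary analysis is then needed only for the converse and for the matching upper bound on the weight. As submitted, your argument neither carries out that analysis nor supplies this elementary lower-bound device, so the proposition remains unproved; if the intent is simply to use the result, the honest course (and the paper's) is to cite Belinschi (2003).
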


Then we have  the following recurrence equation of the maximum eigenvalue.
\begin{lemma}\label{lemma:atom}
Fix $L \in \N$.
Let  $\beta_1 = 1 $ and $\beta_{\ell}  =   1  - \sum_{k=1}^{\ell-1}(1 - \alpha_k) $  for $\ell \geq2$.
Assume that  $ \beta_L > 0$.
Then for any $\ell \leq  L$,  the value $|| \mu_\ell||_\infty$  is an atom of $\mu_\ell$ with weight  $\beta_{\ell}$.
Furthermore, we have $ || \mu_\ell ||_\infty  =    q_{\ell-1} + \sigma_\ell^2 \gamma_{\ell-1}  || \mu_{\ell-1} ||_\infty$ for $\ell \neq 1$.
\end{lemma}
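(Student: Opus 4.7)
The plan is to prove both assertions simultaneously by induction on $\ell$ from $1$ up to $L$. For the base case $\ell = 1$, the distribution $\mu_1 = \delta_{q_0}$ is a point mass, so $\|\mu_1\|_\infty = q_0$ is trivially an atom of weight $\beta_1 = 1$, and there is no recursion to check.

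For the inductive step, suppose the statement holds at level $\ell$ with $1 \leq \ell < L$, so $\|\mu_\ell\|_\infty$ is an atom of $\mu_\ell$ of mass $\beta_\ell$. Since $1 - \alpha_k \geq 0$, the sequence $(\beta_k)$ is nonincreasing, and the hypothesis $\beta_L > 0$ forces $\beta_{\ell+1} > 0$; moreover $\beta_{\ell+1} = \beta_\ell + \alpha_\ell - 1$ directly from the definition. On the other hand, from $\nu_\ell = (1-\alpha_\ell)\delta_0 + \alpha_\ell\,\delta_{\gamma_\ell}$, the point $\gamma_\ell = \|\nu_\ell\|_\infty$ is an atom of $\nu_\ell$ of mass $\alpha_\ell$. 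The hypothesis $\beta_\ell + \alpha_\ell - 1 = \beta_{\ell+1} > 0$ of the preceding atom proposition on free multiplicative convolutions is therefore satisfied, so that proposition yields an atom of $\nu_\ell \boxtimes \mu_\ell$ located at $\gamma_\ell \|\mu_\ell\|_\infty$ with mass exactly $\beta_{\ell+1}$.

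To identify this atom with the top edge $\|\nu_\ell \boxtimes \mu_\ell\|_\infty$, I will use the operator-theoretic realization of $\boxtimes$: for positive free operators $A,B$ with spectral distributions $\nu_\ell$ and $\mu_\ell$, the spectral distribution of $A^{1/2} B A^{1/2}$ is $\nu_\ell \boxtimes \mu_\ell$ and
$$ \|A^{1/2}BA^{1/2}\| \;\leq\; \|A\|\,\|B\| \;=\; \gamma_\ell\,\|\mu_\ell\|_\infty . $$
Hence the support of $\nu_\ell \boxtimes \mu_\ell$ is contained in $[0,\gamma_\ell\|\mu_\ell\|_\infty]$, and this inequality is saturated by the atom we just produced. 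Now the recursion $\mu_{\ell+1} = \affine{\sigma_{\ell+1}^2}{q_\ell}(\nu_\ell \boxtimes \mu_\ell)$ is the pushforward along the strictly increasing affine map $x \mapsto q_\ell + \sigma_{\ell+1}^2 x$, which preserves atoms together with their masses and sends the top edge to the top edge. Therefore $\|\mu_{\ell+1}\|_\infty = q_\ell + \sigma_{\ell+1}^2 \gamma_\ell \|\mu_\ell\|_\infty$ is an atom of $\mu_{\ell+1}$ of weight $\beta_{\ell+1}$, closing the induction and delivering both conclusions at once.

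The main nonroutine step is not the bookkeeping of the atomic masses (that is handed to us by the atom proposition plus the telescoping identity for $\beta_{\ell+1}$) but rather verifying that the atom produced at $\gamma_\ell \|\mu_\ell\|_\infty$ genuinely sits at the top of the support of $\nu_\ell \boxtimes \mu_\ell$ and not in its interior; this is precisely what the operator-norm bound $\|A^{1/2}BA^{1/2}\| \leq \|A\|\,\|B\|$ secures, and it is the key positivity input that makes the induction go through.
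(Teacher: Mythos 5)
Your proof is correct and follows essentially the same route as the paper's: induction using the Belinschi atom formula for $\boxtimes$ together with the bound $||\mu \boxtimes \nu||_\infty \leq ||\mu||_\infty ||\nu||_\infty$ (which the paper invokes directly and you justify via the operator realization $A^{1/2}BA^{1/2}$), followed by the affine pushforward. The only difference is organizational — the paper first establishes the recursively defined $\lambda_\ell$ as an atom and then separately identifies it with $||\mu_\ell||_\infty$, whereas you carry both facts through a single induction.
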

\begin{proof}
Let us define $\lambda_\ell \in \R$ recursively by $\lambda_{\ell} = q_{\ell-1} + \sigma_\ell^2 \gamma_{\ell-1} \lambda_{\ell-1} (\ell \geq 2)$ and $\lambda_1=q_0$. 
Firstly we prove that $\lambda_{\ell}$ is an atom with weight $\beta_\ell$ of $\mu_\ell$ for $\ell \leq L$.
In the case $\ell = 1$, we have $\mu_1 = \delta_1 = \delta_{\lambda_1}$. 
Fix $\ell > 1$ and assume that $\lambda_{\ell-1}$ is an atom of $\mu_{\ell-1}$ with weight $\beta_{\ell-1}$. 
Now $\beta_{\ell-1} + \alpha_{\ell-1} - 1 = \beta_\ell \geq \beta_L > 0$.
Hence by \cref{prop:atom},  $\nu_{\ell-1} \boxtimes \mu_{\ell-1}$ has an atom $\gamma_{\ell-1}\lambda_{\ell-1}$ with weight $\beta_\ell$. Therefore $\mu_{\ell}$ has the atom $\lambda_\ell$ with weight $\beta_\ell$. 
The claim follows from the induction on $\ell$.
To complete the proof, we only need to show that $\lambda_\ell = ||\mu_\ell||_\infty$. Clearly $\lambda_\ell \leq  ||\mu_\ell||_\infty$.
Note that $||\mu \boxtimes \nu ||_\infty \leq ||\mu||_\infty || \nu ||_\infty$.
Then $||\mu_\ell||_\infty \leq  q_{\ell-1}   + \sigma_\ell^2 \gamma_{\ell-1}||\mu_{\ell-1}||_\infty$.
Thus it holds that $ ||\mu_\ell||_\infty  \leq \lambda_\ell$ since $||\mu_1||_\infty = q_0 =  \lambda_1$.
Hence the claim follows.  
\end{proof}

Now we have prepared to prove the desired theorem. 
\begin{thm}\label{thm:maximum-appendix}
Consider Assumption 4.1 and 4.2.
Then for sufficiently larger $L$,  it holds that $||\mu_L||_\infty$  is an atom of $\mu_L$ with weight $1- (L-1)(1-\alpha_{L-1})$,  and 
\begin{align}
     \lim_{L\to\infty}L^{-1}||\mu_L||_\infty  = q \eps_2^{-1} \left[ 1 - \exp\left(-\eps_2\right) \right].
\end{align}
In particular, the limit has the expansion $q(1 - \eps_2/2) + O(\eps_2^2)$ as the further limit $\eps_2 \to 0$.
\end{thm}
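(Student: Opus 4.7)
The strategy is to apply \cref{lemma:atom} directly after verifying its positivity hypothesis, then solve the resulting linear recursion for $||\mu_L||_\infty$ in closed form, and finally pass to the limit using \cref{assumption:eps}. The substantive work of the theorem has already been done in \cref{lemma:atom}; what remains is essentially a scalar computation.

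First I would check that the hypothesis $\beta_L>0$ of \cref{lemma:atom} holds for all sufficiently large $L$. Because \cref{assumption:simplify} forces $\alpha_\ell$ to be constant in $\ell$, one has $\beta_L=1-(L-1)(1-\alpha_{L-1})$, and \cref{assumption:eps} gives $\beta_L\to 1-\eps_1$, which is strictly positive since $|\eps_1|<1$. The lemma then yields the atom assertion with weight $1-(L-1)(1-\alpha_{L-1})$ together with the affine recursion $||\mu_\ell||_\infty=q_{\ell-1}+\sigma_\ell^2\gamma_{\ell-1}||\mu_{\ell-1}||_\infty$ and the initial condition $||\mu_1||_\infty=q_0$.

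Second I would invoke \cref{assumption:simplify} once more to collapse all $\ell$-indexed coefficients to their common (but $L$-dependent) values: writing $q$ for the common value of $q_\ell$ and $r$ for the common value of $\sigma_\ell^2\gamma_{\ell-1}$, the recursion reduces to the geometric sum
\[
||\mu_L||_\infty = q\sum_{k=0}^{L-1}r^k = q\,\frac{1-r^L}{1-r}.
\]
The two asymptotic inputs I then need are $r^L\to\exp(-\eps_2)$ and $L(1-r)\to\eps_2$. The first follows immediately from $L\log r\to-\eps_2$, and the second from the expansion $1-r=-\log r+O((\log r)^2)$ with $\log r=O(1/L)$. Dividing the closed form by $L$ and passing to the limit produces the claimed formula $q(1-\exp(-\eps_2))/\eps_2$, and the standard Taylor expansion of $(1-\exp(-x))/x$ around $x=0$ gives the final assertion about small $\eps_2$.

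The only genuinely delicate step is absorbed into \cref{lemma:atom}: one must argue inductively that $||\mu_\ell||_\infty$ remains an atom under iteration of the free multiplicative convolution $\nu_\ell\boxtimes\mu_\ell$, and also that it continues to coincide with the top of the support rather than being overtaken by an absolutely continuous tail. This relies on the quantitative atom-preservation rule of \cref{prop:atom} and on the submultiplicativity bound $||\mu\boxtimes\nu||_\infty\le||\mu||_\infty\,||\nu||_\infty$; it is precisely here that the quantitative hypothesis $|\eps_1|<1$ in \cref{assumption:eps} is used. Given that lemma, the present theorem is a one-step linear-recursion computation together with a careful passage to the limit in $L$ with the parameters themselves varying with $L$, and the argument is essentially mechanical.
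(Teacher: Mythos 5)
Your proposal is correct and follows essentially the same route as the paper: verify $\beta_L>0$ from $\eps_1<1$, invoke \cref{lemma:atom} for the atom statement and the affine recursion, reduce to a geometric sum under \cref{assumption:simplify}, and pass to the limit via $L(1-r)\to\eps_2$ and $r^L\to\exp(-\eps_2)$ (the paper delegates this last step to the identical computation in \cref{prop:expectation}, which you instead write out explicitly).
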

\begin{proof}
Since $\eps_1 < 1$, we have $ 1- \alpha_{L-1} < (L-1)^{-1}$ for sufficiently large $L$. Then $\beta_L = 1 -(L-1)(1-\alpha_{L-1}) >0$. Hence by \cref{lemma:atom},  for any $\ell \leq L$,  it holds that $||\mu_\ell||_\infty$ is an atom of $\mu_\ell$ and  $||\mu_L||_\infty = q_L  \sum_{\ell=0}^{L-1}(\sigma_L^2\gamma_{L-1})^\ell$. Then by the same discussion as  Proposition~4.4, the assertion follows.
\end{proof}

\cref{thm:maximum-appendix} shows that the maximum eigenvalue of the conditional FIM $H_L$ is $O(L)$ as $L \to \infty$.
Furthermore, we emphasize that the weight $1 - (L-1)(1-\alpha_{L-1}) \sim 1 - \eps_1$ of the maximal eigenvalue $||\mu_{L}||$ is close to $1$.
Therefore, eigenvalues of the dual conditional FIM $H_L$  concentrates around $qL(1+\eps_2/2)$, and the dual FIM approximates the scaled identity operator.
Clearly the same property holds for  non-zero eigenvalues of the conditional FIM $\mc{I}(\theta|x)$.

\end{document}